\documentclass{article}

\usepackage{microtype}
\usepackage{graphicx}
\usepackage{subcaption}
\usepackage{booktabs} 

\usepackage{hyperref}

\usepackage[accepted]{icml2026}

\usepackage{multirow}
\usepackage{bbold}
\usepackage{times}
\usepackage{epsfig}
\usepackage{colortbl}
\usepackage{threeparttable}
\usepackage{amsmath}
\usepackage{amssymb}
\usepackage{mathtools}
\usepackage{amsthm}
\usepackage{pifont}
\usepackage[ruled,linesnumbered]{algorithm2e}

\usepackage[capitalize,noabbrev]{cleveref}

\theoremstyle{plain}
\newtheorem{theorem}{Theorem}[section]
\newtheorem{proposition}[theorem]{Proposition}

\theoremstyle{definition}

\newtheorem{assumption}[theorem]{Assumption}
\theoremstyle{remark}

\usepackage[textsize=tiny]{todonotes}

\icmltitlerunning{Turning Adaptation into Assets: Cross-Domain Bridging for Online Vision-Language Navigation}

\begin{document}

\twocolumn[
  \icmltitle{Turning Adaptation into Assets: Cross-Domain Bridging\\ for Online Vision-Language Navigation}




\icmlsetsymbol{corresponding}{$\dagger$}

\begin{icmlauthorlist}

\icmlauthor{Zixuan Hu}{peking}
\icmlauthor{Xuantuo Huang}{peking2}
\icmlauthor{Yancheng Li}{peking2}
\icmlauthor{Yichun Hu}{peking}
\icmlauthor{Shengyong Xu}{peking2}
\icmlauthor{Ling-Yu Duan}{corresponding,peking,pengcheng}
\end{icmlauthorlist}

\icmlaffiliation{peking}{School of Computer Science, Peking University, Beijing, China}
\icmlaffiliation{pengcheng}{Peng Cheng Laboratory, Shenzhen, China}
\icmlaffiliation{peking2}{School of Electronics, Peking University, Beijing, China}

\icmlcorrespondingauthor{Ling-Yu Duan}{lingyu@pku.edu.cn}

  \icmlkeywords{Machine Learning, ICML}

  \vskip 0.3in
]



\printAffiliationsAndNotice{}  

\begin{abstract}
Navigating under non-stationary environment shifts poses a critical challenge for a Vision-and-Language Navigation (VLN) agent deployed in the wild. Yet, existing Test-Time Adaptation (TTA) methods for VLN largely treat online adaptation as transient, isolated updates, leading to catastrophic forgetting and negative transfer. To overcome these issues, we propose \textbf{I}nter-\textbf{D}omain Bridg\textbf{E} with Historical \textbf{A}ssets (\textbf{IDEA}), a novel TTA framework that transforms adaptation into the accumulation and composition of assets. Specifically, IDEA introduces soft prompts optimized via a Fisher-guided weighting scheme to capture the transferable knowledge. These optimized prompts are then augmented with domain coordinates to form a dynamic asset library. Leveraging this library, IDEA constructs a cross-domain bridge by projecting the target domain onto the convex hull of historical knowledge. These designs form a complementary loop: the evolving library underpins bridge construction, while the bridge provides superior initialization to accelerate asset optimization.
Extensive experiments across REVERIE, R2R, and R2R-CE benchmarks demonstrate the consistent superiority of IDEA, showcasing its ability to enable training-free adaptation via asset sharing. 
\end{abstract}

\section{Introduction}
Vision-and-Language Navigation (VLN) serves as a fundamental embodied task, enabling the agent to ground language instructions into an action sequence that leads to the goal location~\cite{anderson2018vision,gu2022vision,song2025towards}. In real-world navigation, it rarely resembles the curated conditions of training, as agents inevitably encounter unseen environments where visual appearances and spatial layouts can differ substantially across episodes. Moreover, VLN exhibits rapid intra-episode shifts, with observations changing markedly along the trajectory. Such dynamic distribution shifts cause significant performance drops~\cite{gu2022vision,Guhur2021AirbertIP}, posing a key challenge for reliable deployment in the wild.
\begin{figure}[t!]
	\centering	\includegraphics[width=1\linewidth]{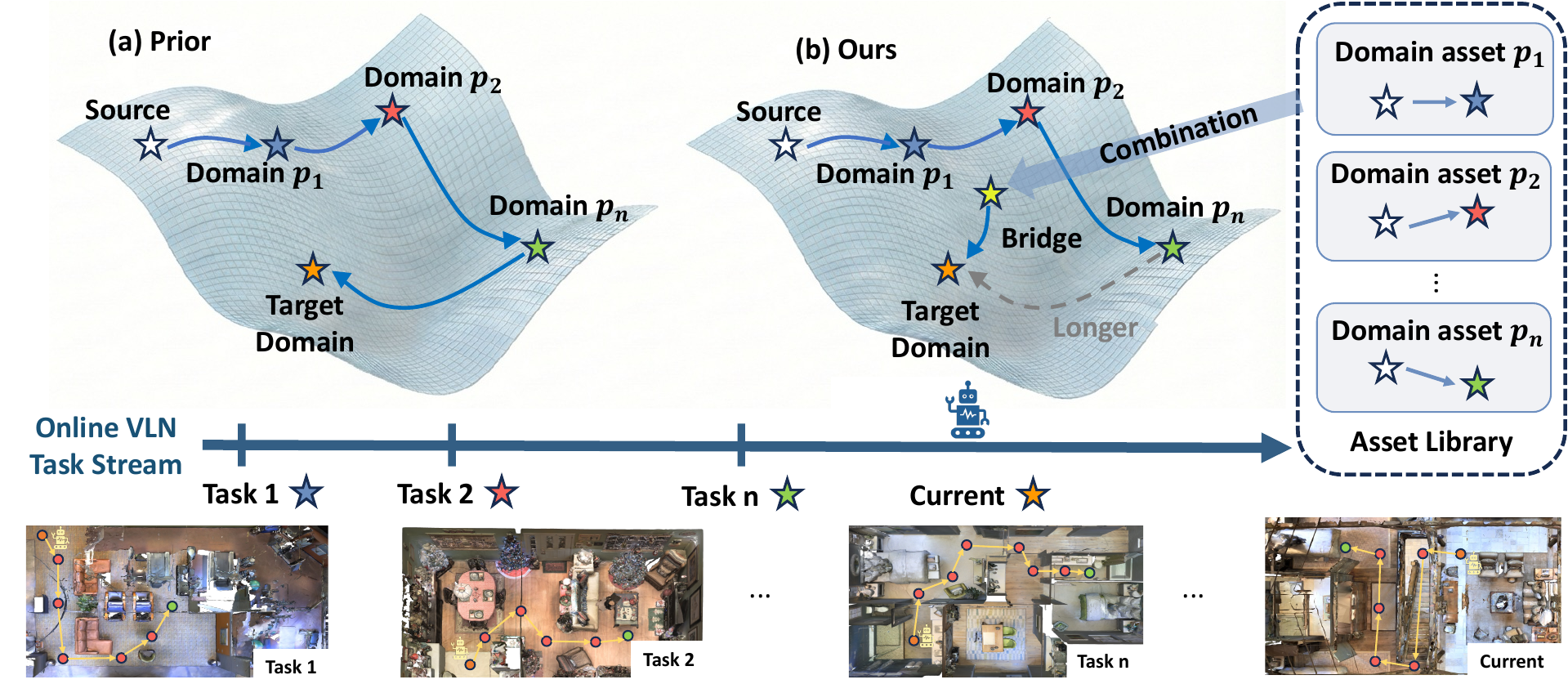}
	\vspace{-5mm}
	\caption{
		Illustration of different adaptation formulations in VLN. (a)\hspace{-0.15em} Prior works:\hspace{-0.15em} A\hspace{-0.1em} series\hspace{-0.1em} of\hspace{-0.05em} isolated domain transfer tasks. \hspace{-0.1em}(b)\hspace{-0.1em} Ours: Turns\hspace{-0.05em} adaptation into accumulation \hspace{-0.05em}and\hspace{-0.05em} reuse of composable assets.
	}
	\label{fig:teaser}
	\vspace{-5mm}
\end{figure}

To mitigate distribution shifts with minimal overhead, Test-Time Adaptation (TTA) has emerged as a critical paradigm, enabling models to adapt to target distributions via online updates~\cite{iwasawa2021test,liang2023comprehensive,alfarra2024evaluation}. In embodied navigation, existing TTA methods broadly fall into two categories. \textit{Uncertainty self-training} leverages entropy measures to drive step-wise updates for improved action selection~\cite{gao2024fast}. \textit{Feedback-driven adaptation} incorporates corrective signals from foundation models (FMs) or human feedback~\cite{kimtest}; however, the high inference cost of FMs renders such guidance impractical for real-time deployment. Given frequent scene transitions in VLN, both categories rely on continual adaptation to stay aligned with the evolving input stream.

Despite their advantages, current studies model online VLN as a series of isolated domain-transfer tasks (see Fig.~\ref{fig:teaser}(a)), ignoring correlations across recurring or related contexts. This yields two critical bottlenecks:
1) \textit{Catastrophic forgetting.} Online updates on a fixed parameter set often overwrite earlier adaptations, hindering the ability to leverage historical experience upon revisiting scenes.
2) \textit{Negative transfer.} Without modeling domain relations, updates derived from one context may be blindly applied to a dissimilar one, introducing mismatched priors and degrading performance.
Such issues contribute to wasted adaptation effort and impede building transferable knowledge across vast environments.

To address these challenges, we propose a novel TTA framework ``\textbf{I}nter-\textbf{D}omain Bridg\textbf{E} with Historical \textbf{A}ssets (\textbf{IDEA})’’, which reformulates VLN adaptation as the accumulation and composition of knowledge across correlated domains, as shown in Fig.~\ref{fig:teaser}(b). IDEA adopts a \textit{knowledge-as-asset} view: instead of treating past training as transient or domain-specific, it progressively distills the historical knowledge into a lightweight and context-aware asset library. This structured repository directly addresses forgetting by preserving history, while allowing assets to be selectively retrieved and recomposed to prevent negative transfer. Leveraging this library, IDEA identifies a training-free shortcut to the target domain, constructing an optimal bridge that effectively bypasses the lengthy adaptation trajectory.

Specifically, IDEA introduces compact prompts to encode adaptation knowledge, optimizing them to reduce the source–target gap in the representation space. To enhance knowledge transferability, we propose a Fisher-guided weighting scheme to adaptively prioritize the policy-sensitive parameters, isolating task-essential priors from domain-specific noise.
These optimized prompts are then enriched with domain coordinates to form reuse-ready assets.
Building upon the library, we search the closest target projection onto the convex hull spanned by historical assets under Wasserstein distance, producing a cross-domain bridge that shortens the adaptation pathway. 
To keep this step lightweight, we derive a closed-form solution via the KKT conditions, avoiding iterative solvers. These two designs operate as a complementary loop: as the asset library evolves, it offers richer building blocks for bridge construction, and the bridge in turn provides a better initialization that accelerates subsequent asset optimization.

We evaluate the effectiveness and generalizability of our method through experiments on the discrete VLN benchmarks REVERIE~\cite{qi2020reverie} and R2R~\cite{anderson2018vision}, achieving state-of-the-art results with average improvements of \textbf{+2.5\%} SR and \textbf{+1.9\%} SPL. Furthermore, we showcase its superior performance in addressing shifts in continuous benchmark R2R-CE~\cite{krantz2020beyond}, yielding consistent gains compared to existing methods.

\textbf{Contributions.} 1) To the best of our knowledge, we are the first to transform adaptation knowledge into structured assets, pioneering a plug-and-play reuse paradigm for Test-Time Adaptation in VLN.
2) We propose a novel TTA framework IDEA, which introduces Fisher-guided prompt tuning for transferable knowledge encoding, and constructs a projection-based bridge to provide a training-free shortcut to the target domain.
3) We provide a theoretical analysis demonstrating that our bridging mechanism stably reduces the generalization risk on unseen target domains.
4) Extensive experiments across four models and three benchmarks demonstrate IDEA's superiority. Furthermore, we validate the portability of asset library, showing its potential for scalable sharing to bypass the cold-start phase of new agents.

\vspace{-3mm}
\section{Related Work}
\vspace{-1mm}
\subsection{Vision-Language Navigation}
\vspace{-1mm}
Vision-Language Navigation (VLN) requires an embodied agent to navigate to a target location in a 3D environment based on language instructions and visual observations~\cite{gu2022vision,song2025towards}. In terms of architecture, early approaches formulated VLN as a sequential decision-making problem, employing Recurrent Neural Networks (RNNs) to map sensory inputs to actions~\cite{fried2018speaker,Hong2020LanguageAV}. Subsequently, the advent of multimodal pre-training with Transformers~\cite{vaswani2017attention} established a dominant paradigm, leveraging large-scale pre-training to encode robust cross-modal alignment~\cite{hao2020towards,chen2021history,hong2021vln,huo2023geovln,zhang2025causal}. To optimize these architectures, existing methods primarily rely on offline Imitation Learning to clone expert behaviors~\cite{an2023bevbert,liu2024volumetric,zheng2025efficient}. Many works also incorporate reinforcement learning to refine policies beyond fixed supervised trajectories~\cite{chen2022reinforced,gao2025octonav,xu2025navq}. Despite these advancements, current methods typically operate under a static ``Train-then-Deploy" paradigm, neglecting the utilization of test-time data. The ability of an agent to accumulate knowledge during the testing process would greatly enhance its practical value. 

\vspace{-2mm}
\subsection{Test-Time Adaptation}
\vspace{-1mm}
Test-Time Adaptation aims to enhance the performance on out-of-distribution samples during inference~\cite{liang2023comprehensive,lim2022ttn,lee2023towards}. 
Existing TTA methods generally rely on uncertainty minimization ~\cite{niu2023towards,hu2025beyond,hu2025adaptive}, batch normalization calibration~\cite{mirza2022norm,zhao2023delta,tan2025uncertainty}, or auxiliary self-supervised task (\textit{e.g.} masked reconstruction~\cite{mirza2023mate}) to provide effective proxy signals for reducing domain gaps~\cite{boudiaf2022parameter,liu2024continual}. In the navigation context, exploration primarily falls into two groups: uncertainty minimization and feedback-driven adaptation. The former focuses on enforcing consistency constraints across temporal sequences to mitigate uncertainty~\cite{gao2024fast}. The latter leverages foundation models or human interaction as guidance to provide reward signals for reinforcement learning~\cite{kimtest,ko2025active}.
However, these approaches treat adaptation as isolated tasks, failing to accumulate reusable knowledge for efficient, long-term generalization.

\begin{figure*}[!t]
\begin{center}
\includegraphics[width=1.0\textwidth]{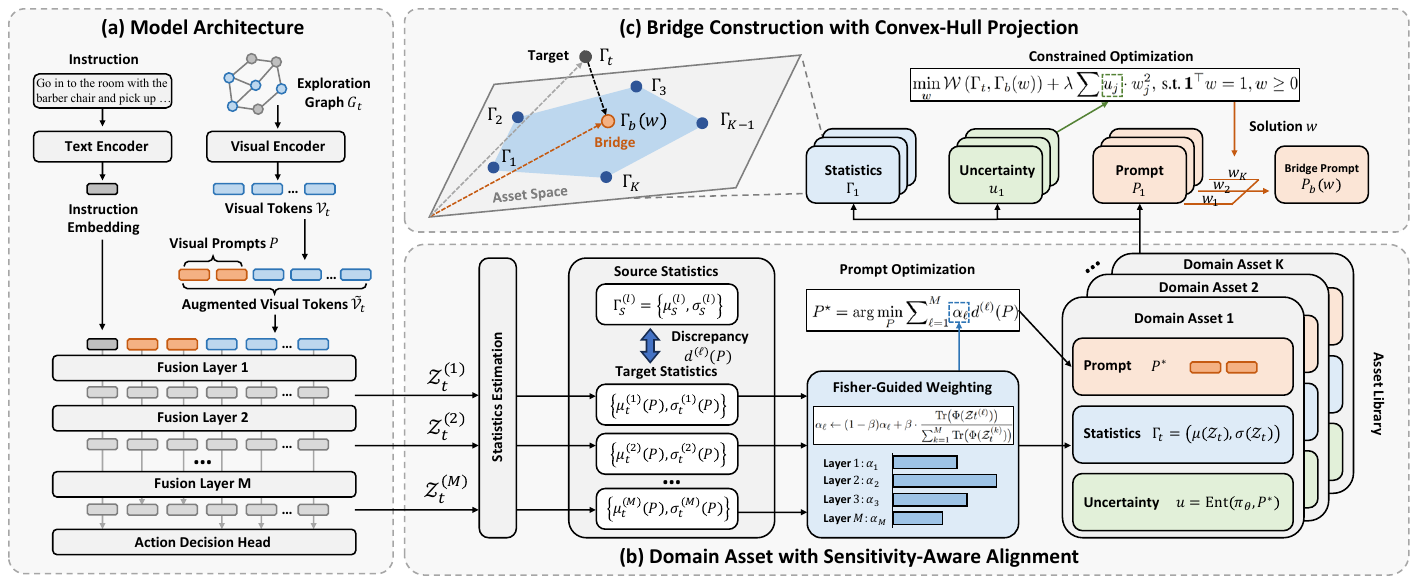}
\end{center}
\vspace{-0.3cm}
\caption{Overview of our IDEA method. ({\it a}) Meta-framework of the VLN models, where dual encoders extract multi-modal tokens, followed by\hspace{-0.05em} a\hspace{-0.05em} fusion transformer and a decision head.\hspace{-0.1em} ({\it b})\hspace{-0.1em} Through our-derived Fisher-guided weighting term, IDEA optimizes soft prompts for sensitivity-aware alignment across fusion layers, forming triplet-structured assets (Sec.~\ref{sec:asset construction}). ({\it c}) Leveraging the asset library, IDEA identifies an efficient adaptation shortcut by computing the optimal projection onto the convex hull spanned by historical assets (Sec.~\ref{sec:bridge construction}).}
\label{fig:pipeline}
\vspace{-0.5cm}
\end{figure*}
\vspace{-0.15cm}

\section{Preliminary}

\textbf{Task Definition.}  
We consider the Vision-and-Language Navigation (VLN) task in a streaming test-time setting. The agent is provided with a pre-trained policy \(\pi_{\theta}\), parameterized by \(\theta\). During testing, the agent receives a sequence of navigation tasks \(\mathcal{X} = \{X_1, \ldots, X_N\}\). Each task \(X_i = (I_i, s_0)\) consists of a natural language instruction \(I_i\) and a visual observation of a \(360^\circ\) panoramic view at the initial point \(s_0\). Starting from \(s_0\), the agent iteratively selects an action until a stop is chosen, producing a trajectory:
\begin{equation}
\tau_i = \{(s_t, a_t)\}_{t=0}^{T_i-1},\quad a_t \sim \pi_{\theta}(\cdot \mid s_t, I_i),
\end{equation}
where \(T_i\) denotes total number of steps for task \(X_i\).

\textbf{Model Architecture.}  
During navigation, the policy model progressively constructs an undirected exploration graph \(G_t = (V_t, E_t)\), where \(V_t\) denotes navigable nodes and \(E_t\) encodes their connectivity. It processes the current state with a dual-branch encoder: a visual branch extracts node-wise tokens \(\mathcal{V}_t\in \mathbb{R}^{N \times C}\) from \(G_t\) using a pre-trained Vision Transformer, while a language branch encodes the instruction \(I\) into a fixed embedding \(\mathcal{I}\in\mathbb{R}^{C}\).

The model then fuses the visual tokens \(\mathcal{V}_t\) with the instruction embedding \(\mathcal{I}\) using a multi-layer transformer, yielding a set of multi-modal representations:
\begin{equation}
    \mathcal{Z}_t = \phi(\mathcal{V}_t, \mathcal{I}), \quad \mathcal{Z}_t = \{z_{i}\}_{i=1}^{N} \in \mathbb{R}^{N \times C},
\end{equation}
where \(\phi(\cdot)\) denotes the fusion function, \(z_{i}\) denotes the fused feature for the \(i\)-th candidate node, \(N\) denotes the number of candidate nodes, and \(C\) is the feature dimension. Finally, a decision head maps each \(z_i\) to a scalar score and outputs the next action by choosing the highest-scoring candidate.

\section{Methodology}
In this paper, we propose IDEA (\textbf{I}nter-\textbf{D}omain Bridg\textbf{E} with Historical \textbf{A}ssets), a novel TTA framework for VLN. IDEA incorporates two core designs: Domain asset library with sensitivity-aware alignment (Sec.~\ref{sec:asset construction}), and Bridge construction mechanism via convex-hull projection (Sec.~\ref{sec:bridge construction}). Together, these two components synergize to facilitate efficient, robust adaptation and continuous knowledge reuse. An overview of the framework is illustrated in Fig.~\ref{fig:pipeline}.

\vspace{-3mm}
\subsection{Turning Adaptation into Assets}
\label{sec:asset construction}
\vspace{-1.5mm}
Existing VLN adaptation methods typically cast optimization as a series of isolated transfer tasks, resulting in overwritten knowledge and neglecting inter-domain connections. To break this isolation, we reformulate adaptation as the continuous accumulation of portable assets, moving from domain-specific updates to decoupled knowledge storage. Ideally, these assets should be \textit{plug-and-play}, \textit{transferable}, and \textit{retrievable}, allowing for seamless integration and reuse. To realize this, we construct a prompt-based representation equipped with sensitivity-aware alignment, serving as the building block for our asset library.

\vspace{-3mm}
\paragraph{Soft Visual Prompts.}
To enable a plug-and-play mechanism, we represent each knowledge as $L$ learnable prompt tokens $P:=\{p_i\}_{i=1}^{L}$ with $p_i\in\mathbb{R}^{C}$. At navigation step $t$, the visual encoder outputs node-wise visual tokens $\mathcal{V}_t\in\mathbb{R}^{N\times C}$. We inject the soft prompt into the visual tokens by appending it to the token sequence, yielding:
\vspace{-1mm}
\begin{equation}
\tilde{\mathcal{V}}_t = [P;\,\mathcal{V}_t]\in\mathbb{R}^{(L+N)\times C}.
\label{eq:soft prompt}
\end{equation}
\vspace{-8mm}

These augmented tokens $\tilde{\mathcal{V}}_t$ are then fed into the cross-modal fusion module together with the instruction embedding. In this way, the prompt can guide visual-language alignment without altering the frozen policy parameters.

\vspace{-3mm}
\paragraph{Multi-Layer Alignment.}
To construct a reusable knowledge asset, a learned prompt must effectively bridge the distribution gap between the target environment and the source-trained backbone.
Leveraging the source latent space as a robust anchor, we introduce a hierarchical alignment mechanism that regularizes prompt-augmented representations to match source-domain statistics. By decomposing the cross-modal $\phi$ into $M$ layers, the knowledge asset captures structural distribution shifts at different levels of abstraction.

\vspace{-1mm}
Formally, we precompute the source feature statistics (mean and standard deviation), denoted as $\Gamma_S^{(\ell)} = \{\mu_S^{(\ell)}, \sigma_S^{(\ell)}\}$ for $\ell \in \{1, \dots, M\}$, using a small subset of source data (128 samples). At each navigation step $t$, after injecting $P$, we extract the fused tokens $\mathcal{Z}_t^{(\ell)}(P)$ from layer $\ell$ and estimate their online statistics $\Gamma_t^{(\ell)}(P)$ over navigable nodes. The prompt is optimized to minimize the distributional divergence via the following moment-matching loss:
\vspace{-2mm}
\begin{equation}
d^{(\ell)}(P)=\|\mu_S^{(\ell)}-\mu_t^{(\ell)}(P)\|_2+\|\sigma_S^{(\ell)}-\sigma_t^{(\ell)}(P)\|_2.
\end{equation}
\vspace{-6mm}

where $\|\cdot\|_2$ denotes $\mathcal{L}_2$ norm. To enable adaptive alignment, the final objective is a weighted sum, with $\alpha_{\ell}$ controlling the contribution of each layer to the asset construction:
\vspace{-1mm}
\begin{equation}
P^\star=\arg\min_{P}\sum\nolimits_{\ell=1}^{M}\alpha_{\ell}\, d^{(\ell)}(P).
\label{eq:weight}
\end{equation}
\vspace{-5mm}

\vspace{-2mm}
\paragraph{Fisher-Guided Weighting.}
To enhance asset transferability, we constrain the prompts to capture transferable task priors while suppressing overfitting to domain-specific factors. Our intuition relies on distinguishing between spurious and functional alignment: if a layer's optimization contributes solely to statistical matching but fails to influence the policy decisions, it implies the adaptation is merely fitting irrelevant distributional noise~\cite{lee2024entropy,hu2024lead}. In contrast, dynamics that significantly impact the output reflect the encoding of task-essential semantics. Therefore, we analyze the sensitivity of each layer's representation relative to the decision output to guide the weighting process.

\vspace{-1mm}
In principle, such sensitivity is measured by the curvature of the policy objective, \textit{i.e.}, the Hessian matrix. This second-order quantity describes how the policy output changes in response to perturbations in given parameters. However, computing the Hessian is computationally prohibitive for online inference, as evident in the following decomposition:
\vspace{-1mm}
\begin{equation}
\begin{aligned}
H(z) & =\nabla_z^2 (-\log\pi(a \mid z))=-\nabla_z\left(\frac{\nabla_z \pi(a\mid z)}{\pi(a \mid z)}\right) \\
& =-\frac{\pi(a \mid z) \nabla_z^2 \pi(a \mid z)-\nabla_z \pi(a \mid z)\nabla_z \pi(a \mid z)^{\top}}{\pi(a \mid z)^2},
\end{aligned}
\end{equation}
where $H(z)$ denotes the Hessian matrix of representation $z$. 

\vspace{-2mm}
To alleviate the computational burden, we derive a tractable proxy for the Hessian using first-order gradients. Since the ground-truth action is unavailable at test time, we reformulate the loss by treating the model’s prediction as a soft label and taking the expectation with respect to the policy distribution. Under this framework, the expected Hessian simplifies to the Fisher Information Matrix, which requires only a single gradient computation:
\vspace{-1mm}
\begin{equation}
\Phi(z)
:=\mathbb{E}_{a \sim \pi_{\theta}(\cdot)}
\big[\nabla_{z}\log \pi_{\theta}(a)\;
\nabla_{z}\log \pi_{\theta}(a)^\top\big].
\end{equation}
\vspace{-6mm}

We instantiate $z$ as the fused tokens $\mathcal{Z}^{(\ell)}$ at layer $\ell$, computed without prompt injection. For the Fisher matrix, we utilize its trace as a scalar proxy for layer-wise sensitivity. The normalized trace is then used to update the weight $\alpha_{\ell}$ in Eq.~\ref{eq:weight} via an exponential moving average:
\vspace{-1mm}
\begin{equation}
\alpha_{\ell}
\leftarrow(1-\beta)\alpha_{\ell}
+\beta\cdot
\frac{\mathrm{Tr}\big(\Phi(\mathcal{Z}t^{(\ell)})\big)}
{\sum_{k=1}^{M}\mathrm{Tr}\big(\Phi(\mathcal{Z}_t^{(k)})\big)},
\label{eq:score}
\end{equation}
\vspace{-6mm}

where $\beta$ is the smoothing coefficient, set to $0.1$ by default.

\vspace{-3mm}
\paragraph{Triplet-Structured Asset.}
Based on the above mechanism, we formalize adaptation knowledge not merely as optimized parameters, but as structured assets indexed within the domain manifold. To ensure retrieval and reuse, we augment the learned prompt with domain coordinates and quality metrics. Formally, each asset is defined as a triplet $\mathcal{A}:=\{P^*, \Gamma, u\}$, comprising three complementary components:
1) \textit{a learned soft prompt} $P^{*}$ that encodes task-relevant adaptation knowledge and directly modulates the policy; 
2) \textit{feature statistics} $\Gamma_t=\{\mu(\mathcal{Z}_t), \sigma(\mathcal{Z}_t)\}$ extracted from the final fusion layer without prompt injection, which serve as a prompt-independent descriptor of the environment; and  
3) \textit{an uncertainty score} $u=Ent(\pi_{\theta,P^*})$ defined as the prediction entropy at current step with $P^*$, which reflects the reliability of the adapted behavior with $\mathcal{A}$.

\vspace{-3mm}
\subsection{Cross-Domain Bridge Construction}
\label{sec:bridge construction}
\vspace{-2mm}
Building upon accumulated assets, reusing prior domain knowledge offers an effective shortcut for bridging the gap between the source domain and the test-time target. A natural baseline is to perform hard retrieval, selecting the nearest-neighbor asset based on statistical similarity. However, a novel environment often exhibits partial overlap with multiple domains, rendering  retrieval brittle and prone to mismatch. To fully exploit the representational power of the library, we cast bridge construction as finding the closest projection on the convex hull of assets, as shown in Fig.~\ref{fig:pipeline}(c).

\vspace{-3mm}
\paragraph{Convex-Hull Projection Fitting.}
Given the asset library $\mathcal{M}=\{\mathcal{A}_i\}_{i=1}^K$, we introduce mixture weights $w\in\mathbb{R}^{K}$ to formulate the projection as a soft composition. Specifically, we employ shared coefficients to linearly interpolate in both the parameter space (prompts) and the statistical space (domain coordinates). This unified projection yields an adaptation bridge in the form of a composite prompt, along with an induced distribution:
\vspace{-2mm}
\begin{equation}
P_{b}(w)=\sum\nolimits_{j=1}^{K} w_j \cdot P_j,\;
\Gamma_{b}(w)=\sum\nolimits_{j=1}^{K} w_j\cdot\Gamma_j,
\label{eq:bridge_mix}
\end{equation}
\vspace{-7mm}

where the subscript $b$ denotes the bridge. 

To determine the optimal mixture weights, we project the target statistics \(\Gamma_t\) onto \(\Gamma_{b}(w)\) by minimizing the Wasserstein distance, assuming the feature statistics follow multivariate Gaussian distributions. To enhance robustness, we incorporate an uncertainty-aware regularizer that suppresses large weights on unreliable assets via their uncertainty scores \(u_j\). The resulting constrained optimization problem is given by:
\begin{equation}
\min_{w}\mathcal{W}\left(\Gamma_t,\Gamma_{b}(w)\right) +\lambda \sum u_j\cdot w_j^2,
\, \text{s.t.}\, \mathbf{1}^{\top}w=1, w\geq0,
\label{eq:wass_obj}
\end{equation}
where \({\mathcal{W}}(\cdot,\cdot)\) denotes the 2-Wasserstein distance between the Gaussian distributions parameterized by the respective statistics, and \(\lambda\) controls the strength of the regularization.

\vspace{-2mm}
\paragraph{Closed-Form Solution.}
Directly optimizing Eq.~\eqref{eq:wass_obj} via gradient descent incurs substantial training overhead. Instead, we exploit the quadratic structure of the Wasserstein alignment objective and derive a closed-form solution via the Karush–Kuhn–Tucker (KKT) conditions. We first transform the original problem into a standard quadratic programming structure with affine equality constraints:
\begin{equation}
\min _w \| A w-b \|_2^2+\lambda w^{\top} U w,\, \text { s.t. } \textbf{1}^{\top} w=1,w\geq0,
\label{eq:quad_obj}
\end{equation}
where $A=\left[\Gamma_1,\dots,\ \Gamma_K\right]\in \mathbb{R}^{2C\times K}$ and $b=\left[\Gamma_t\right] \in \mathbb{R}^{2 C}$ denote the vectorized feature statistics of the asset library and the target domain. $U=\operatorname{diag}\left\{u_1, \cdots, u_k\right\}$ denotes the diagonal matrix of uncertainty scores.

We further solve this linearly constrained quadratic program via the KKT conditions. Specifically, we incorporate the affine constraint into the objective via a Lagrange multiplier $\nu$. The stationarity condition implies that the gradient of the objective must be aligned with the constraint normal. Let \(\mathcal{H}=A^\top A+\lambda U\) denote the regularized Hessian matrix capturing asset correlations, and \(g=A^\top b\) denotes the projection of the target onto the asset basis. Solving the resulting linear system yields the optimal weights:
\begin{equation}
w^{*}=\mathcal{H}^{-1}\!\left(g-\nu\,\mathbf{1}\right),
\quad
\nu=\frac{\mathbf{1}^\top \mathcal{H}^{-1}g-1}{\mathbf{1}^\top \mathcal{H}^{-1}\mathbf{1}}.
\label{eq:closed_form}
\end{equation}
This derivation allows IDEA to construct an optimal bridge via simple matrix operations. By eliminating the need for iterative tuning, this mechanism acts as a training-free shortcut to the target domain, significantly accelerating the adaptation process by bypassing the lengthy trajectory.

\vspace{-2mm}
\subsection{Overall Procedure of IDEA}
\vspace{-1mm}
At each navigation step, IDEA dynamically determines whether to utilize the bridge or acquire a new asset based on domain coverage. Specifically, we first solve for the optimal weights $w$ using Eq.~\ref{eq:closed_form} and construct the adaptation bridge $P_b(w)$. Next, we measure the statistics discrepancy with and without prompt injection, denoted by $d_p$ and $d_0$, respectively. If the reduction satisfies \( d_p < \tau\cdot d_0 \), the domain is deemed covered and $P_b(w)$ is applied for inference. Otherwise, we regard it as a novel domain and optimize $P_b(w)$ via Eq.~\ref{eq:weight}, yielding a new asset $\mathcal{A}^*=\{P^*,\Gamma^*, u^*\}$. To maintain a fixed budget, we employ a nearest-neighbor merging strategy:
\begin{equation}
\begin{cases}
\mathcal{A}_k \leftarrow \dfrac{1}{2}\left(\mathcal{A}_k + \mathcal{A}^{*}\right),
\quad \text{if } K \geq K_{\max}; \\[6pt]
\mathcal{M} \leftarrow \mathcal{M} \cup \left\{ \mathcal{A}^{*} \right\},
\quad \text{if } K \,\textless\, K_{\max};\\[6pt]
\text{where }\; k = \displaystyle \arg\min\nolimits_{k} \, d\!\left(\Gamma_k, \Gamma^{*}\right),
\end{cases}
\label{eq:assetupdate}
\end{equation}
where $K_{max}$ is a capacity budget of the asset library.

\vspace{-2mm}
\subsection{\hspace{-0.2em}Theoretical Analysis on Stability and Generalization}
\vspace{-1mm}
We provide theoretical insight into why our method can stably reduce generalization risk on novel target domains. Under the common assumption that feature representations within a domain follow multivariate Gaussian distributions~\cite{heusel2017gans,hu2025seva}, we establish two key results: 1) The mixture weights produced by Eq.~\ref{eq:wass_obj} tighten a principled upper bound on the target generalization error, and are optimal within the convex hull of historical assets. 2) The solution in Eq.~\ref{eq:closed_form} is Lipschitz-stable with respect to perturbations in the estimated statistics, implying robustness to test-time estimation noise and bias. Together, these propositions demonstrate that our bridge can stably reduce generalization risk at test time without additional training—a crucial advantage for VLN deployments.

All detailed analysis and proof are provided in Appendix.~\ref{sm:proof}.

\begin{table*}[t]
\caption{Experimental results for different TTA strategies on REVERIE datasets. We highlight the best and second results with {\bf bold} and \underline{underline} respectively. In the last column, we report the average per-episode inference time, measured in milliseconds.}
\vspace{-1mm}
\newcommand{\gr}[1]{\cellcolor{gray!15}{#1}}
\centering
\small
\resizebox{\textwidth}{!}{
\begin{tabular}{l|cccc|cccc|cccc|c}
\toprule
\multirow{2}{*}{\textbf{Methods+Model}} &
\multicolumn{4}{c|}{\textbf{REVERIE Val seen}} &
\multicolumn{4}{c|}{\textbf{REVERIE Val unseen}} &
\multicolumn{4}{c|}{\textbf{REVERIE Test unseen}} &
\multirow{2}{*}{\textbf{Time(ms)}\tnote{*}} \\
& OSR$\uparrow$ & SR$\uparrow$ & SPL$\uparrow$ & RGSPL$\uparrow$ &
  OSR$\uparrow$ & SR$\uparrow$ & SPL$\uparrow$ & RGSPL$\uparrow$ &
  OSR$\uparrow$ & SR$\uparrow$ & SPL$\uparrow$ & RGSPL$\uparrow$ & \\
\midrule

HAMT~\cite{chen2021history}           & 47.65 & 43.29 & 40.19 & 25.18 & 36.84 & 32.95 & 30.20 & 17.28 & 33.41 & 30.40 & \underline{26.67} & 13.08 & 74.9 \\
\hspace{0.5em}+~Tent~\cite{wang2020tent}    & 46.03 & 43.43 & 40.78 & 25.81 & 32.60 & 30.56 & 28.23 & 14.48 & 25.06 & 23.73 & 21.78 & 10.82 & 147.7 \\
\hspace{0.5em}+~SAR~\cite{niu2023towards}    & 47.12 & 43.85 & 39.97 & 25.44 & 32.86 & 31.12 & 29.15 &15.23 & 27.94 & 25.86 & 23.08 & 11.58 & 197.25 \\
\hspace{0.5em}+~ViDA~\cite{liu2024vida}    & 47.67 & 43.55 & \underline{41.23} & 25.27 & 32.74 & 30.97 & 28.82 & 14.97 & 27.03 & 24.81 & 22.45 & 11.23 & $5.49 \times{10^3}$ \\
\hspace{0.5em}+~FSTTA~\cite{gao2024fast}  & 48.21 & 42.87 & 39.56 & 24.58 & 36.78 & 32.89 & \underline{30.51} & 17.20 & 33.39 & 30.39 & 26.65 & \underline{13.61} & 613.4 \\
\hspace{0.5em}+~ReCAP~\cite{hu2025beyond}  & \underline{48.49} & \underline{44.06} & 40.69 & \underline{25.46} & \underline{37.04} & \underline{33.06} & 30.28 & \underline{17.37} & \underline{34.11} & \underline{30.51} & 24.27 & 13.11 & 213.5 \\
\rowcolor[HTML]{E6F1FF}
\hspace{0.5em}+~\textbf{IDEA (Ours)}   & \textbf{50.67} & \textbf{47.33} & \textbf{42.13} & \textbf{26.82} & \textbf{39.87} & \textbf{34.92} & \textbf{31.52} & \textbf{17.76} & \textbf{38.14} & \textbf{32.81} & \textbf{28.52} & \textbf{14.45} & 245.8 \\

\cmidrule(lr){1-14}

DUET~\cite{chen2022think}         & 73.86 & 71.75 & 63.94 & 51.14 & 51.07 & 46.98 & 33.73 & 23.03 & 56.91 & 52.51 & 36.06 & 22.06 & 123.2 \\
\hspace{0.5em}+~Tent~\cite{wang2020tent}  & 73.72 & 71.89 & 64.06 & 50.41 & 51.43 & 47.55 & 33.99 & 23.32 & 57.12 & 52.61 & 36.17 & 22.16 & 258.9 \\
\hspace{0.5em}+~SAR~\cite{niu2023towards}    & 74.84 & 71.75 & 64.43 & 51.70 & 53.26 & 48.00 & 33.92 & 23.09 & 57.11 & 53.04 & 36.07 & 22.27 & 267.5\\
\hspace{0.5em}+~ViDA~\cite{liu2024vida}    & 73.99 & 72.49 & 63.49 & 50.89 & 52.53 & 48.14 & 32.45 & 21.92 & 56.78 & 52.74 & 35.10 & 21.77 & $7.31\times{10^3}$ \\
\hspace{0.5em}+~FSTTA~\cite{gao2024fast}  & \underline{75.59} & \underline{75.48} & 65.84 & 52.23 & 56.26 & 54.15 & \underline{36.41} & 23.56 & \underline{58.44} &\underline{53.40} & 36.43 & 22.40 & 833.6 \\
\hspace{0.5em}+~ReCAP~\cite{hu2025beyond}  & 75.06 & 74.72 & \underline{65.87} & \underline{53.42} & \underline{56.67} & \underline{54.74} & 36.22 & \underline{23.74} & 57.72 & 53.07 & \underline{36.52} & \underline{22.47} & 359.4 \\
\rowcolor[HTML]{E6F1FF} 
\hspace{0.5em}+~\textbf{IDEA (Ours)} & \textbf{78.45} & \textbf{78.24} & \textbf{67.74} & \textbf{55.07} & \textbf{58.51} & \textbf{56.92} & \textbf{38.03} & \textbf{25.47} & \textbf{58.91} & \textbf{55.12} & \textbf{39.84} & \textbf{24.52} & 344.3 \\
\bottomrule
\end{tabular}
\vspace{-4mm}
}
\label{tab:REVERIE}
\vspace{-2.5mm}
\end{table*}

\begin{table}[t]
\vspace{-3mm}
\caption{Experimental results on the R2R dataset.}
\vspace{-1mm}
\newcommand{\gr}[1]{\cellcolor{gray!15}{#1}}
\centering
\small
\resizebox{\columnwidth}{!}{
\begin{tabular}{l|cccc|cccc}
\toprule
\multirow{2}{*}{\textbf{Methods}} &
\multicolumn{4}{c|}{\textbf{R2R Val Seen}} &
\multicolumn{4}{c}{\textbf{R2R Val Unseen}} \\
& TL$\downarrow$ & NE$\downarrow$ & SR$\uparrow$ & SPL$\uparrow$ &
  TL$\downarrow$ & NE$\downarrow$ & SR$\uparrow$ & SPL$\uparrow$ \\
\midrule

DUET                 & 12.33 & 2.28 & 79 & 73 & 13.94 & 3.31 & 72 & 60 \\
\hspace{0.5em}+~Tent & 12.17 & 2.38 & 78 & 72 & 13.78 & 3.42 & 72 & 60 \\
\hspace{0.5em}+~SAR  & 12.05 & 2.28 & 78 & 72 &13.59 & 3.28 & 72 & 61 \\
\hspace{0.5em}+~ViDA & 12.11 & 2.31 & 79 & 73 & 13.63 & 3.34 & 72 & 61 \\
\hspace{0.5em}+~FSTTA & 13.39 & \underline{2.25} & 79 & 73 & 14.64 & \underline{3.03} & \underline{75} & \underline{62} \\
\hspace{0.5em}+~ReCAP & \underline{12.02} & 2.27 & 78 & 73 & \underline{13.39} & 3.28 & 72 & 61 \\
\rowcolor[HTML]{E6F1FF} 
\hspace{0.5em}+~\textbf{IDEA} & \textbf{11.23} & \textbf{2.03} & \textbf{81} & \textbf{76} & \textbf{12.47} & \textbf{2.91} & \textbf{76} & \textbf{67} \\
\cmidrule(lr){1-9}

BEVBert              & 13.56 & \underline{2.17} & 81 & 74 & 14.55 & 2.81 & 75 & 64 \\
\hspace{0.5em}+~Tent & 12.68 & 2.36 & 80 & 74 & 13.14 & 2.93 & 74 & 63 \\
\hspace{0.5em}+~SAR  &12.49 & 2.28 & 80 & 74 & 12.98 & 2.79 & 75 & 64 \\
\hspace{0.5em}+~ViDA & 12.53 & 2.31 & 81 & 74 & 13.11 & 2.83 & 75 & 64 \\
\hspace{0.5em}+~FSTTA & \underline{12.28} & 2.31 & 80 & \underline{75} & 13.96 & 2.89 & 74 & 63 \\
\hspace{0.5em}+~ReCAP & 12.31 & 2.27 & 81 & 74 & \underline{12.94} & \underline{2.78} & 75 & 64 \\
\rowcolor[HTML]{E6F1FF} 
\hspace{0.5em}+~\textbf{IDEA} & \textbf{10.89} & \textbf{2.15} & \textbf{83} & \textbf{79} & \textbf{12.03} & \textbf{2.53} & \textbf{76} & \textbf{68} \\
\bottomrule
\end{tabular}
\vspace{-2mm}
}
\label{tab:r2r}
\vspace{-3mm}
\end{table}

\vspace{-3mm}
\section{Experiments}
\vspace{-1mm}
\subsection{Experimental Setup} 
\vspace{-1mm}
For a fair comparison, we follow the evaluation pipeline in prior works \cite{gao2024fast,kimtest}, including datasets, pre-trained policies, training recipes, and evaluation protocols.

\vspace{-1mm}
\paragraph{Datasets.}
We evaluate our method on three representative VLN benchmarks covering both discrete and continuous environments: REVERIE~\cite{qi2020reverie}, R2R~\cite{anderson2018vision}, and R2R-CE~\cite{krantz2020beyond}. REVERIE is a goal-oriented task where agents localize remote objects based on high-level instructions. R2R is a standard instruction-following benchmark, where agents navigate to a target viewpoint via step-by-step instructions. Beyond these discrete settings, we also adopt R2R-CE, a continuous variant of R2R that introduces low-level control challenges.

\vspace{-1mm}
\paragraph{Pre-trained Navigation Policies.}
We validate our method on four base models: HAMT~\cite{chen2021history}, DUET~\cite{chen2022think}, BEVBert~\cite{an2023bevbert}, and ETPNav~\cite{an2024etpnav}. HAMT employs a transformer architecture optimized via reinforcement learning to handle long-horizon navigation.
DUET advances this by integrating a topological map into a dual-scale graph transformer for efficient global planning.
BEVBert further enhances spatial reasoning by incorporating bird’s-eye-view representations.
For continuous environments, we utilize ETPNav, which is designed for robust long-range planning under continuous control.
Our IDEA framework is applied to these pre-trained policies during inference phase.

\vspace{-2mm}
\paragraph{Compared Methods.}
We compare our IDEA with the following state-of-the-art methods: SAR~\cite{niu2023towards} employs entropy minimization combined with sharpness-aware minimization to enhance stability.
ViDA~\cite{liu2024vida} injects low-rank and high-rank adapters to decouple domain-invariant and domain-specific representations.
FSTTA~\cite{gao2024fast} performs decomposition-accumulation analysis for both gradients and parameters.
ReCAP~\cite{hu2025beyond} models regional uncertainty, enforcing implicit data scaling.
Additionally, we provide a detailed comparison with feedback-driven methods in the Appendix.~\ref{sm:experiment}.

\vspace{-2mm}
\paragraph{Implementation Details.}
We adopt a batch size of $1$ to properly simulate the online streaming setting and employ the AdamW optimizer with a fixed learning rate of $3\times10^{-3}$ and a momentum of $0.9$. Regarding hyperparameters, we set the prompt length $L=4$, the asset library size $K_{max}=32$, the regularization strength $\lambda=0.4$, and ratio threshold $\tau=0.7$ by default. All experiments are conducted on a single NVIDIA RTX 4090 GPU.

\vspace{-1mm}
\paragraph{Evaluation Protocols.}
We follow the commonly used evaluation protocols from the previous works~\cite{chen2022think,chen2022learning,li2022envedit,wang2023gridmm}: Success Rate (SR), Oracle Success Rate (OSR), Success weighted by Path Length (SPL), Remote Grounding Success weighted by Path Length (RGSPL), Trajectory Length (TL), and Navigation Error (NE). Detailed definitions are provided in Appendix.~\ref{sm:details}.

\subsection{Main Results}

\begin{table}[t!]
\vspace{-3mm}
\caption{Experimental results on the R2R-CE dataset.}
\vspace{-1mm}
\newcommand{\gr}[1]{\cellcolor{gray!15}{#1}}
\centering
\small
\resizebox{\columnwidth}{!}{
\begin{tabular}{l|ccccc|ccccc}
\toprule
\multirow{2}{*}{\textbf{Methods}} &
\multicolumn{5}{c|}{\textbf{R2R-CE Val Seen}} &
\multicolumn{5}{c}{\textbf{R2R-CE Val Unseen}} \\
& TL$\downarrow$ & NE$\downarrow$ & OSR$\uparrow$ & SR$\uparrow$ & SPL$\uparrow$ &
  TL$\downarrow$ & NE$\downarrow$ & OSR$\uparrow$ & SR$\uparrow$ & SPL$\uparrow$ \\
\midrule

BEVBert              & 13.98 & 3.77 & 73 & 68 & 60 & 13.27 & 4.57 & \underline{67} & 59 & 50 \\
\hspace{0.5em}+~Tent & 12.74 & 3.35 & 76 & 70 & 62 & 13.29 & 4.61 & 65 & 59 & 49 \\
\hspace{0.5em}+~SAR  & 12.53 & \underline{3.28} & 76 & 70 & \underline{63} & 13.23 & 4.54 & 66 & 59 & 50 \\
\hspace{0.5em}+~ViDA & 12.61 & 3.31 & 76 & \underline{71} & 62 & 13.26 & 4.57 & \underline{67} & 59 & 49 \\
\hspace{0.5em}+~FSTTA  & 14.07 & 4.11 & 74 & 69 & 60 & 13.11 & \underline{4.39} & 65 & \underline{60} & \underline{51} \\
\hspace{0.5em}+~ReCAP  & \underline{12.40} & 3.31 & 76 & \underline{71} & \underline{63} & \underline{13.01} & 4.57 & 66 & \underline{60} & 50 \\
\rowcolor[HTML]{E6F1FF} 
\hspace{0.5em}+~\textbf{IDEA} & \textbf{12.27} & \textbf{3.04} & \textbf{78} & \textbf{73} & \textbf{64} & \textbf{12.67} & \textbf{4.26} & \textbf{69} & \textbf{62} & \textbf{52} \\
\cmidrule(lr){1-11}

ETPNav & 11.78 & 3.95 & 72 & 66 & 59 & 11.99 & 4.71 & 65 & 57 & 49 \\
\hspace{0.5em}+~Tent & 11.36 & 3.94 & 72 & 66 & 59 & 11.57 & 4.74& 64 & 57 & 49 \\
\hspace{0.5em}+~SAR  & 11.31 & \underline{3.89} & 72 & 66 & 60 & \underline{11.55} &  \underline{4.67} & 65 & 57 & 49 \\
\hspace{0.5em}+~ViDA & 11.58 & 3.91 & 72 & 66 & 60 & 11.62 & 4.72 & 64 & 57 & 49 \\
\hspace{0.5em}+~FSTTA  & 11.35 & 3.93 & 72 & 66 & 59 & 11.57 & 4.77 & 64 & 57 & 49 \\
\hspace{0.5em}+~ReCAP  & \underline{11.31} & 3.92 & 72 & 66 & 60 & 11.56 & 4.74 & 65 & 57 & \underline{50} \\
\rowcolor[HTML]{E6F1FF} 
\hspace{0.5em}+~\textbf{IDEA} & \textbf{10.87} & \textbf{3.82} & \textbf{73} & \textbf{68} & \textbf{62} & \textbf{11.17} & \textbf{4.47} & \textbf{67} & \textbf{59} & \textbf{51} \\
\bottomrule
\end{tabular}
\vspace{-2mm}
}
\label{tab:r2r-ce}
\vspace{-5mm}
\end{table}

\begin{table*}[t]
\caption{Experimental results for asset transfer in our IDEA framework on REVERIE dataset. We highlight the best result with {\bf bold}.}
\vspace{-1mm}
\newcommand{\gr}[1]{\cellcolor{gray!15}{#1}}
\centering
\small
\resizebox{\textwidth}{!}{
\begin{tabular}{p{0mm}cc|cccc|cccc|cccc}
\toprule
\multicolumn{1}{c}{} &
\multirow{2}{*}{\textbf{Shared Assets}} & \multirow{2}{*}{\textbf{Adaptation}} &
\multicolumn{4}{c|}{\textbf{Val seen $\rightarrow$ Test unseen}} &
\multicolumn{4}{c|}{\textbf{Test unseen $\rightarrow$ Val unseen}} &
\multicolumn{4}{c}{\textbf{Val seen + unseen $\rightarrow$ Test unseen}} \\
\multicolumn{1}{c}{} &  &  &
OSR$\uparrow$ & SR$\uparrow$ & SPL$\uparrow$ & RGSPL$\uparrow$ &
OSR$\uparrow$ & SR$\uparrow$ & SPL$\uparrow$ & RGSPL$\uparrow$ &
OSR$\uparrow$ & SR$\uparrow$ & SPL$\uparrow$ & RGSPL$\uparrow$ \\
\midrule

\multirow{4}{*}{\makebox[0pt][c]{\rotatebox{90}{\textbf{HAMT}}}}
& \ding{56} & \ding{56} & 33.41 & 30.40 & 26.67 & 13.08 & 36.84 & 32.95 & 30.20 & 17.28 & 33.41 & 30.40 & 26.67 & 13.08 \\
& \ding{56} & \ding{52} & 38.14 & 32.81 & 28.52 & 14.45 & 39.87 & 34.92 & 31.52 & 17.76 & 38.14 & 32.81 & 28.52 & 14.45 \\
& \ding{52} & \ding{56} & 33.83 & 30.88 & 27.05 & 13.38 & 37.35 & 33.57 & 30.77 & 17.39 & 34.57 & 31.68 & 27.74 & 13.81 \\
& \ding{52} & \ding{52} & \textbf{38.31} & \textbf{33.07} & \textbf{28.63} & \textbf{14.69} & \textbf{40.18} & \textbf{35.27} & \textbf{31.94} & \textbf{18.11} & \textbf{38.51} & \textbf{33.32} & \textbf{28.83} & \textbf{14.84} \\
\midrule

\multirow{4}{*}{\makebox[0pt][c]{\rotatebox{90}{\textbf{DUET}}}}
& \ding{56} & \ding{56} & 56.91 & 52.51 & 36.06 & 22.06 & 51.07 & 46.98 & 33.73 & 23.03 & 56.91 & 52.51 & 36.06 & 22.06 \\
& \ding{56} & \ding{52} & 58.91 & 55.12 & 39.84 & 24.52 & 58.51 & 56.92 & 38.03 & 25.47 & 58.91 & 55.12 & 39.84 & 24.52 \\
& \ding{52} & \ding{56} & 57.38 & 53.12 & 36.57 & 23.61 & 51.38 & 47.45 & 33.96 & 23.27 & 57.83 & 53.44 &36.91 & 24.58 \\
& \ding{52} & \ding{52} & \textbf{59.23} & \textbf{55.87} & \textbf{40.08} & \textbf{24.85} & \textbf{58.83} & \textbf{57.39} & \textbf{38.49} & \textbf{25.84} & \textbf{59.44} & \textbf{56.11} & \textbf{40.35} & \textbf{25.47} \\
\bottomrule
\end{tabular}
}
\label{tab:asset transfer}
\vspace{-4mm}
\end{table*}

\paragraph{Evaluation on REVERIE.} We first compare our IDEA with previous methods on the REVERIE dataset, where TTA is applied to HAMT and DUET. The results, reported in Tab.~\ref{tab:REVERIE}, reveal several key observations: 1) Existing methods struggle on the test unseen split, where nearly all competitors fail to improve or even degrade the SPL metric, indicating severe instability and negative transfer. 2) In contrast, IDEA consistently outperforms compared approaches across all data splits, being the \textit{only} method to achieve positive gains on all metrics. Notably, it yields substantial average gains of \textbf{+2.5\%} SR and \textbf{+2.8\%} SPL on test unseen. 3) Regarding efficiency, IDEA requires less than half the inference time of FSTTA and is comparable to ReCAP, while delivering more stable gains. As shown in Fig.~\ref{fig:complementary}, this efficiency stems from training-free adaptation bridge, which bypasses iterative optimization for the majority of incoming domains, validating our design intuition of asset reuse.

\paragraph{Evaluation on R2R \& R2R-CE.} We further evaluate different methods on R2R and R2R-CE datasets, as shown in Tab.~\ref{tab:r2r}\&\ref{tab:r2r-ce}. The experimental results yield the following observations: 
1) On the R2R val seen and R2R-CE val unseen splits, existing methods achieve only marginal improvements, whereas our method boosts performance by $\textbf{+2\%}$ SR and $\textbf{+3\%}$ SPL.
2) IDEA consistently enhances two different base models and maintains the best performance across all four scenarios, further demonstrating its strong generalization across discrete and continuous environments.

\begin{figure}[t!]
	\centering
    \vspace{-1.5mm}
	\includegraphics[width=\linewidth]{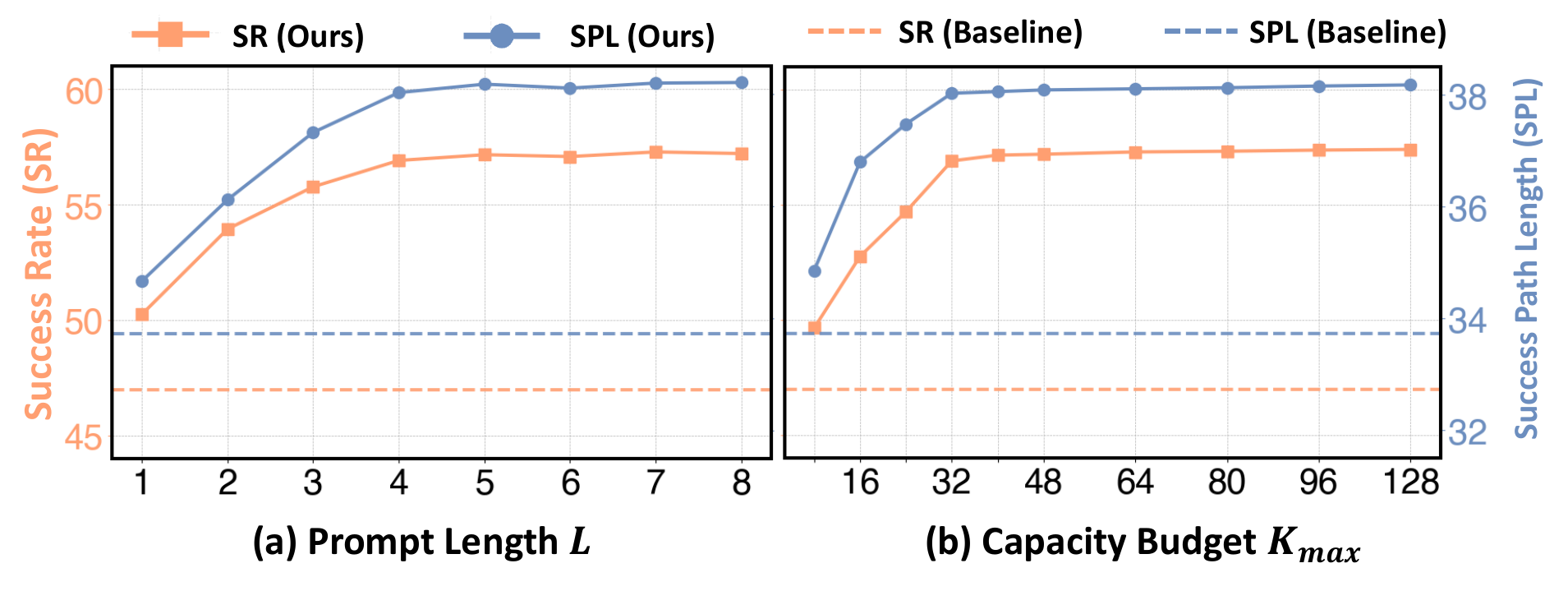}
	\vspace{-3mm}
	\caption{(a) Performance with varying length $L$ of the soft prompt. (b) Performance with different capacity $K_{max}$ of the asset library.}
	\label{fig:ablation1}
	\vspace{-7mm}
\end{figure}

\vspace{-2mm}
\subsection{Experiments on Asset Transfer}
\vspace{-0.5mm}
To evaluate asset transferability, we simulate a deployment scenario where libraries constructed in source environments are transferred to target agents. We design three protocols on the REVERIE dataset: \textit{Val Seen → Test Unseen}, \textit{Test Unseen → Val Unseen}, and \textit{Val Seen + Unseen → Test Unseen}. The third protocol represents a multi-source setting, where assets collected by multiple agents are aggregated for the new user.

We compare four configurations based on the availability of shared assets and the usage of online adaptation, as shown in Tab.~\ref{tab:asset transfer}. The results yield three significant insights: 1) \textbf{Training-free Gain:} Shared assets provide consistent training-free improvements across all scenarios. 2) \textbf{Synergy with Adaptation:} Pre-loaded assets complement online adaptation without conflict, achieving peak performance when combined. 3) \textbf{Scaling Benefit}: The multi-source setting confirms that aggregating larger libraries facilitates robust bridge construction, boosting SR by $+1.2\%$ and SPL by $+1.0\%$. This validates IDEA's potential for a collaborative knowledge-sharing paradigm, where aggregating assets from diverse users continuously enhances generalization.

Additionally, we investigate cross-task transferability (REVERIE $\leftrightarrow$ R2R) to validate the versatility of assets. Due to space constraints, results are detailed in the Appendix.~\ref{sm:experiment}.

\section{Ablation Study and Visualization}
\label{sec:ablation}
In this section, without loss of generality, we conduct ablation studies and visualizations using DUET on the REVERIE valid unseen split as a representative testbed. Focusing on two pivotal components of IDEA, \textit{Domain Asset Library} and \textit{Bridge Construction}, we perform various experiments to analyze their impacts.
Please refer to the Appendix.~\ref{sm:ablation} for more experiments and detailed analysis.

\vspace{-2.5mm}
\subsection{Sensitivity Analysis of Asset Configurations}
\vspace{-1mm}
We assess the robustness of IDEA with respect to two critical asset configurations: the prompt length $L$ and the library capacity budget $K_{max}$. We conduct ablation experiments on these two key coefficients independently:

As shown in Fig.~\ref{fig:ablation1}(a), increasing the prompt length yields substantial performance gains on both SR and SPL, as longer prompts offer greater capacity to encode transferable knowledge. However, when $L$ exceeds the optimal range (\textit{e.g.}, 4), further extension yields only marginal improvements. Therefore, we set $L$ to 4 by default to balance representation power and efficiency. Similarly, in Fig.~\ref{fig:ablation1}(b), navigation performance improves rapidly as the library capacity expands, reaching a robust plateau at $K_{max}=32$. This confirms that a denser set of historical assets facilitates more precise bridge construction. Extending the capacity beyond this threshold results in diminishing returns, indicating that the library has sufficiently covered the representative modes of test environments. Therefore, we set $K_{max}$ to $32$ by default. Notably, under the default setting, IDEA incurs a negligible memory overhead of only \textbf{0.58 MB}, adding less than $\textbf{0.1\%}$ to the parameter count of the pre-trained model.

\begin{table}[!t]
\centering
\caption{Impact of component variations. \textbf{Fsh.}/\textbf{Brg.} denote our Fisher-guided weighting (Eq.~\ref{eq:score}) and closed-form bridge solution (Eq.~\ref{eq:closed_form}). For comparison, \textbf{Dec.}/\textbf{Ret.} denote the widely-adopted layer-wise decay weighting~\cite{bao2022beit} and the retrieval of the nearest asset. \textbf{Src.} denotes source model without adaptation.}
\vspace{-0.1cm}
\resizebox{\linewidth}{!}{
\begin{tabular}{ccccc|cccc|ccccccc}
         \toprule[1pt]
         \multirow{2}{*}{\textbf{Src.}} & \multirow{2}{*}{\textbf{Fsh.}} &\multirow{2}{*}{\textbf{Dec.}} & \multirow{2}{*}{\textbf{Brg.}}  & \multirow{2}{*}{\textbf{Ret.}} &
         \multicolumn{4}{c|}{REVERIE Val unseen} &
         \multicolumn{4}{c}{REVERIE Test unseen}
         \\
         \cmidrule(lr){6-9} \cmidrule(lr){10-13} 
         & & & & & OSR & SR & SPL & RGSPL & OSR & SR & SPL & RGSPL \\
         \midrule
          \ding{52} & & & & & 51.07 & 46.98 & 33.73 & 23.03 & 56.91 & 52.51 & 36.06 & 22.06   \\
          & \ding{52} & & & \ding{52} & 54.88 & 50.47 & 35.53 & 24.18 & 57.89 & 54.11 & 37.92 & 22.82 \\
          & & \ding{52} & & \ding{52} & 50.87 & 47.65 & 33.58 & 22.94 & 56.98 & 52.72 & 36.23 & 22.14 \\\
          & & \ding{52} & \ding{52} & & 51.47 & 47.88 & 34.30 & 23.49 & 57.31 & 52.83 & 36.87 & 22.42 \\
          \rowcolor[HTML]{E6F1FF} 
          & \ding{52} & & \ding{52} & &\textbf{58.51} & \textbf{56.92} & \textbf{38.03} & \textbf{25.47} & \textbf{58.91} & \textbf{55.12} & \textbf{39.84} & \textbf{24.52}   \\
         \bottomrule[1pt]
         \end{tabular}
         }
\label{tab:component}
\end{table}

\begin{figure}[t!]
	\centering
    \vspace{-3mm}
	\includegraphics[width=\linewidth]{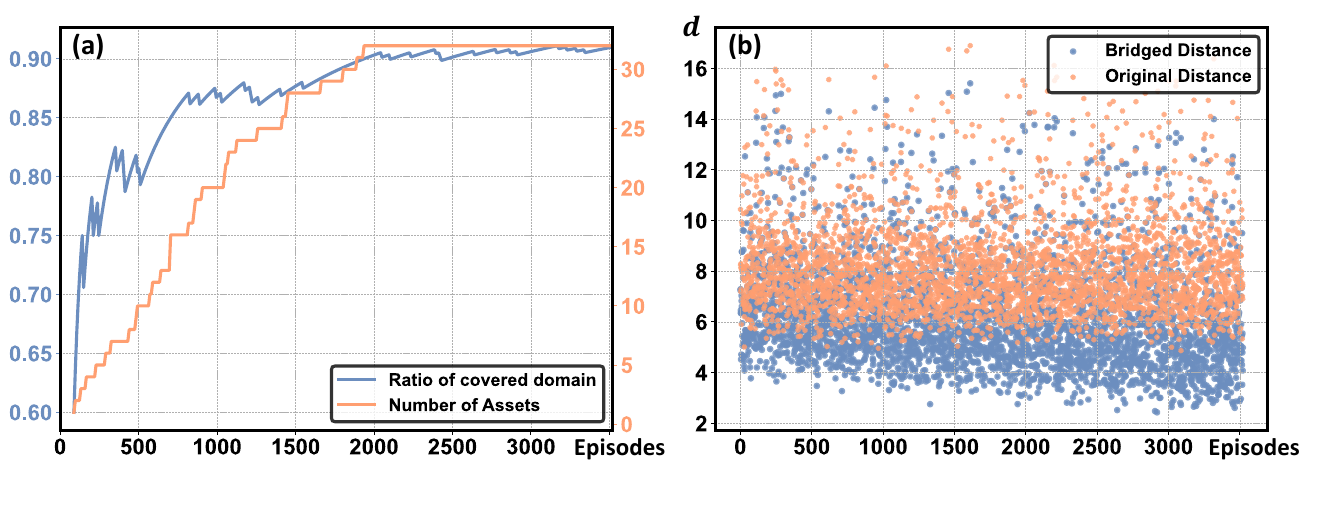}
	\vspace{-6mm}
	\caption{Visualization of adaptation process. (a) Asset accumulation \& Coverage. (b) Discrepancy reduction by our bridge.}
    \vspace{-6mm}
	\label{fig:complementary}
\end{figure}

\vspace{-2mm}
\subsection{Effectiveness of Components.}
\vspace{-0.5mm}
We investigate the impact of individual components by comparing the full method with partial variations, as shown in Tab.~\ref{tab:component}. Here, \textit{Dec.} represents the widely-adopted baseline using layer-wise decay weighting~\cite{bao2022beit}, while \textit{Ret.} denotes a hard selection strategy that retrieves the single nearest neighbor asset as the bridge. Our analysis yields three key observations:
1) The naive baseline employing nearest asset retrieval with standard decay weighting yields performance that merely fluctuates around the unadapted source model. This indicates that hard selection based on heuristic weighting fails to address domain shifts, often resulting in suboptimal prior retrieval.
2) Fisher-guided weighting brings obvious improvements of $+3.5\%$ SR, validating that distinguishing functional layer parameters is critical for identifying transferable knowledge.
3) Our closed-form bridge demonstrates clear superiority over hard retrieval, underscoring the importance of optimal projection for constructing a robust adaptation shortcut.

\vspace{-2mm}
\subsection{Complementary Effects}
\vspace{-0.5mm}
As shown in Tab. \ref{tab:component}, our complete framework consistently achieves the best navigation performance, demonstrating the compatibility of different components. To further investigate the complementary effects of our asset-bridge design, we visualize the adaptation process. As shown in Fig.~\ref{fig:complementary}(a), we observe a positive correlation between library growth and domain coverage (\textit{i.e.} incoming domain where the bridge successfully reduces the discrepancy below the threshold). This validates that a richer repository of ``building blocks" facilitates bridge construction across a wider range of distribution shifts. As shown in Fig.~\ref{fig:complementary}(b), the injected bridge significantly reduces the statistical divergence from the target domain. By narrowing the domain gap, the bridge alleviates the burden of subsequent asset optimization, providing a superior initialization for learning novel environments.

\begin{figure}[t!]
	\centering
	\includegraphics[width=\linewidth]{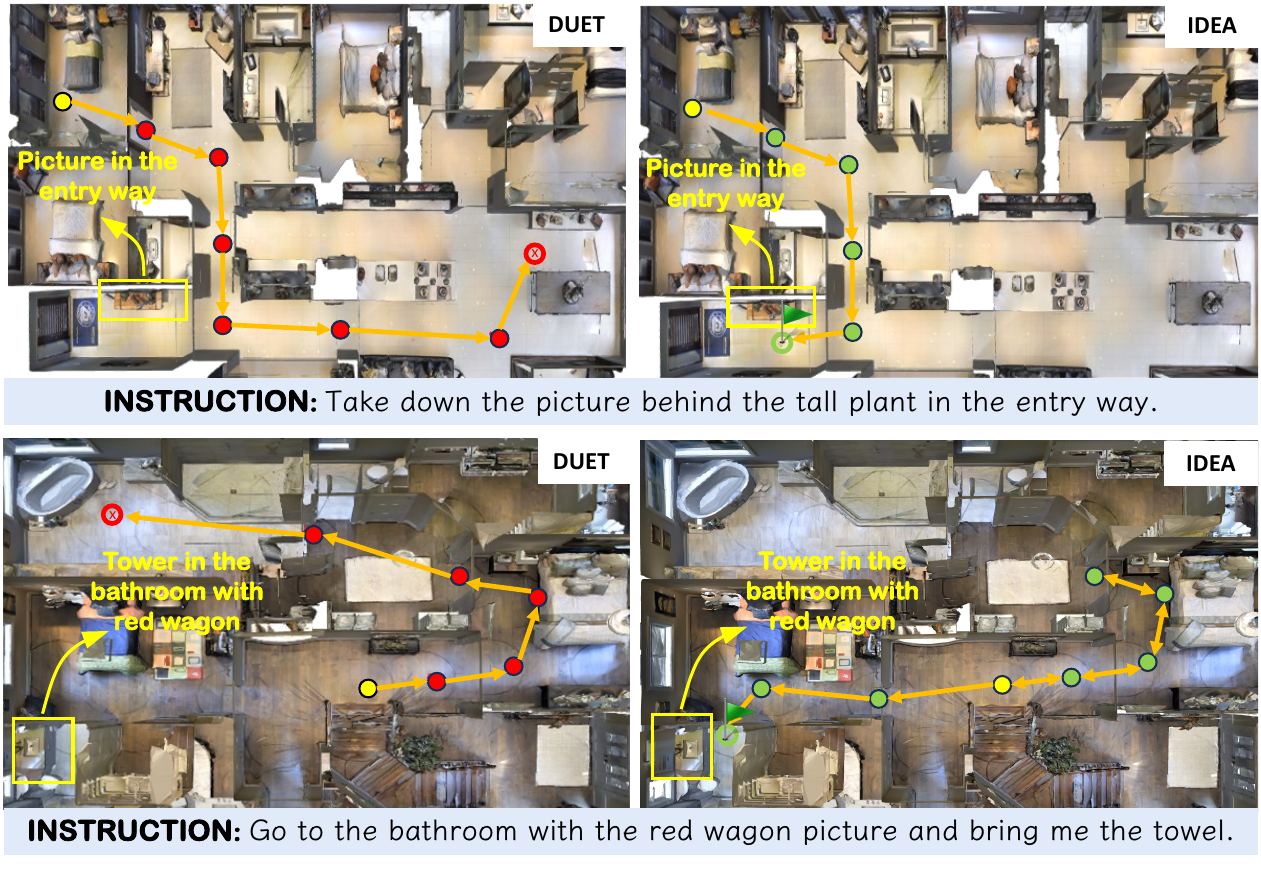}
	\vspace{-3mm}
	\caption{Qualitative comparison of navigation results on REVERIE. Yellow points denote starting positions. Directed lines trace the predicted paths, ending in green (success) or red (failure).}
	\label{fig:vis}
    \vspace{-4mm}
\end{figure}
\vspace{-2mm}
\subsection{Qualitative Results}
\vspace{-0.5mm}
Based on the qualitative comparisons in Fig.~\ref{fig:vis}, IDEA significantly enhances the agent's ability to identify discriminative landmarks compared to the baseline DUET. In challenging scenarios, such as finding a specific room with a \textit{``red wagon"} (bottom), the baseline fails to ground fine-grained textual cues within the novel environment, leading to incorrect termination. In contrast, IDEA successfully corrects these deviations by capturing task-essential visual features. These visualizations further confirm that our adaptation bridge effectively aligns the agent's perception with complex unseen environments, achieving robust navigation that precisely matches the instruction.

\vspace{-3mm}
\section{Conclusion}
\vspace{-1mm}

{
\setlength{\parfillskip}{0pt}
In this paper, we propose IDEA, a novel framework designed to transform adaptation knowledge into reusable assets.
We develop a Fisher-guided weighting scheme to capture transferable knowledge for effective asset utilization, and further construct a training-free adaptation bridge to bypass the lengthy optimization process.
We demonstrate the consistent effectiveness of IDEA across diverse navigation benchmarks and model architectures.
We hope this work inspires the future research on lifelong experience accumulation, paving the way for constructing standardized and generalized asset protocols. 
Such mechanisms are pivotal for enabling efficient deployment and fostering collaboration among agents operating in dynamic environments.
\par
}

\vspace{-3mm}
{
\setlength{\parfillskip}{0pt}
\paragraph{Limitations.} First, our evaluations focus on single-agent benchmarks. While this setting effectively validates individual adaptability, the potential of IDEA framework in multi-agent scenarios remains underexplored. In future work, we plan to extend IDEA to collaborative frameworks, investigating mechanisms for efficient asset sharing and merging among heterogeneous agents. Second, the constructed assets are currently coupled with specific feature spaces. While IDEA demonstrates robust cross-domain and cross-dataset transferability, this dependency restricts its direct application across different model architectures. In future work, we will explore architecture-agnostic asset protocols to break this barrier and establish a more universal standard.
\par
}

\section*{Acknowledgements}
This work was supported by the PKU-NTU Joint Research Institute (JRI) sponsored by a donation from the Ng Teng Fong Charitable Foundation, in part by
AI Joint Lab of Future Urban Infrastructure sponsored by Fuzhou Chengtou New Infrastructure Group and Boyun Vision Co. Ltd.

\section*{Impact Statement}
{
\setlength{\parfillskip}{0pt}
This paper presents work whose goal is to advance the field of Test-Time Adaptation.
Its societal impact lies primarily in enabling Vision-Language Navigation agents to operate reliably in real-world dynamic scenarios without the need for resource-intensive retraining.
By introducing the training-free adaptation bridge based on assets, our work significantly reduces the computational latency associated with online adaptation.
This facilitates the deployment of intelligent agents on resource-constrained edge devices, such as domestic service robots.
Ethically, it promotes sustainable AI practices by eliminating the energy consumption of iterative gradient updates, aligning with broader goals of developing efficient, eco-friendly autonomous systems.
\par
}


\bibliography{main}

@inproceedings{hu2025adaptive,
  title={Adaptive Dual Uncertainty Optimization: Boosting Monocular 3D Object Detection under Test-Time Shifts},
  author={Hu, Zixuan and Li, Dongxiao and Ma, Xinzhu and Tang, Shixiang and Li, Xiaotong and Yang, Wenhan and Duan, Ling-Yu},
  booktitle={Proceedings of the IEEE/CVF International Conference on Computer Vision},
  pages={7273--7283},
  year={2025}
}

@inproceedings{
kimtest,
title={Test-Time Adaptation for Online Vision-Language Navigation with Feedback-based Reinforcement Learning},
author={Sungjune Kim and Gyeongrok Oh and Heeju Ko and Daehyun Ji and Dongwook Lee and Byung-Jun Lee and Sujin Jang and Sangpil Kim},
booktitle={Forty-second International Conference on Machine Learning},
year={2025}
}

@inproceedings{
ko2025active,
title={Active Test-time Vision-Language Navigation},
author={Heeju Ko and Sungjune Kim and Gyeongrok Oh and Jeongyoon Yoon and Honglak Lee and Sujin Jang and Seungryong Kim and Sangpil Kim},
booktitle={The Thirty-ninth Annual Conference on Neural Information Processing Systems},
year={2025}
}

@article{gao2024fast,
  title={Fast-slow test-time adaptation for online vision-and-language navigation},
  author={Gao, Junyu and Yao, Xuan and Xu, Changsheng},
  journal={Forty-first International Conference on Machine Learning},
  year={2024}
}

@article{chen2021history,
  title={History aware multimodal transformer for vision-and-language navigation},
  author={Chen, Shizhe and Guhur, Pierre-Louis and Schmid, Cordelia and Laptev, Ivan},
  journal={Advances in neural information processing systems},
  volume={34},
  pages={5834--5847},
  year={2021}
}

@inproceedings{wang2020tent,
	title={Tent: Fully test-time adaptation by entropy minimization},
	author={Wang, Dequan and Shelhamer, Evan and Liu, Shaoteng and Olshausen, Bruno and Darrell, Trevor},
	booktitle={ICLR},
	year={2021}
}

@inproceedings{chen2022think,
	title={Think global, act local: Dual-scale graph transformer for vision-and-language navigation},
	author={Chen, Shizhe and Guhur, Pierre-Louis and Tapaswi, Makarand and Schmid, Cordelia and Laptev, Ivan},
	booktitle={CVPR},
	pages={16537--16547},
	year={2022}
}

@inproceedings{anderson2018vision,
  title={Vision-and-language navigation: Interpreting visually-grounded navigation instructions in real environments},
  author={Anderson, Peter and Wu, Qi and Teney, Damien and Bruce, Jake and Johnson, Mark and S{\"u}nderhauf, Niko and Reid, Ian and Gould, Stephen and Van Den Hengel, Anton},
  booktitle={Proceedings of the IEEE conference on computer vision and pattern recognition},
  pages={3674--3683},
  year={2018}
}

@inproceedings{krantz2020beyond,
  title={Beyond the nav-graph: Vision-and-language navigation in continuous environments},
  author={Krantz, Jacob and Wijmans, Erik and Majumdar, Arjun and Batra, Dhruv and Lee, Stefan},
  booktitle={European Conference on Computer Vision},
  pages={104--120},
  year={2020},
  organization={Springer}
}

@article{hu2025beyond,
  title={Beyond Entropy: Region Confidence Proxy for Wild Test-Time Adaptation},
  author={Hu, Zixuan and Hu, Yichun and Li, Xiaotong and Tang, Shixiang and Duan, Ling-Yu},
  journal={ICML},
  year={2025}
}

@article{zhao2023test,
  title={Test-time adaptation with clip reward for zero-shot generalization in vision-language models},
  author={Zhao, Shuai and Wang, Xiaohan and Zhu, Linchao and Yang, Yi},
  journal={arXiv preprint arXiv:2305.18010},
  year={2023}
}

@article{niu2023towards,
  title={Towards stable test-time adaptation in dynamic wild world},
  author={Niu, Shuaicheng and Wu, Jiaxiang and Zhang, Yifan and Wen, Zhiquan and Chen, Yaofo and Zhao, Peilin and Tan, Mingkui},
  journal={ICLR},
  year={2023}
}

@inproceedings{qi2020reverie,
	title={Reverie: Remote embodied visual referring expression in real indoor environments},
	author={Qi, Yuankai and Wu, Qi and Anderson, Peter and Wang, Xin and Wang, William Yang and Shen, Chunhua and Hengel, Anton van den},
	booktitle={CVPR},
	pages={9982--9991},
	year={2020}
}

@inproceedings{chen2022learning,
	title={Learning from unlabeled 3d environments for vision-and-language navigation},
	author={Chen, Shizhe and Guhur, Pierre-Louis and Tapaswi, Makarand and Schmid, Cordelia and Laptev, Ivan},
	booktitle={ECCV},
	pages={638--655},
	year={2022}
}

@inproceedings{li2022envedit,
	title={Envedit: Environment editing for vision-and-language navigation},
	author={Li, Jialu and Tan, Hao and Bansal, Mohit},
	booktitle={CVPR},
	pages={6741--6749},
	year={2022}
}

@inproceedings{wang2023gridmm,
	title={GridMM: Grid Memory Map for Vision-and-Language Navigation},
	author={Wang, Zihan and Li, Xiangyang and Yang, Jiahao and Liu, Yeqi and Jiang, Shuqiang},
	booktitle={ICCV},
	pages={15625--15636},
	year={2023}
}

@article{an2024etpnav,
  title={ETPNav: Evolving Topological Planning for Vision-Language Navigation in Continuous Environments},
  author={An, Dong and Wang, Hanqing and Wang, Wenguan and Wang, Zun and Huang, Yan and He, Keji and Wang, Liang},
  journal={IEEE Transactions on Pattern Analysis and Machine Intelligence},
  year={2024}
}

@article{an2023bevbert,
  title={BEVBert: Multimodal Map Pre-training for Language-guided Navigation},
  author={An, Dong and Qi, Yuankai and Li, Yangguang and Huang, Yan and Wang, Liang and Tan, Tieniu and Shao, Jing},
  journal={Proceedings of the IEEE/CVF International Conference on Computer Vision},
  year={2023}
}

@inproceedings{liu2024vida,
title={Vi{DA}: Homeostatic Visual Domain Adapter for Continual Test Time Adaptation},
author={Jiaming Liu and Senqiao Yang and Peidong Jia and Renrui Zhang and Ming Lu and Yandong Guo and Wei Xue and Shanghang Zhang},
booktitle={The Twelfth International Conference on Learning Representations},
year={2024}
}

@inproceedings{
bao2022beit,
title={{BE}i{T}: {BERT} Pre-Training of Image Transformers},
author={Hangbo Bao and Li Dong and Songhao Piao and Furu Wei},
booktitle={International Conference on Learning Representations},
year={2022}
}

@inproceedings{fried2018speaker,
	title={Speaker-follower models for vision-and-language navigation},
	author={Fried, Daniel and Hu, Ronghang and Cirik, Volkan and Rohrbach, Anna and Andreas, Jacob and Morency, Louis-Philippe and Berg-Kirkpatrick, Taylor and Saenko, Kate and Klein, Dan and Darrell, Trevor},
	booktitle={NeurIPS},
	volume={31},
	year={2018}
}

@inproceedings{Hong2020LanguageAV,
	title={Language and Visual Entity Relationship Graph for Agent Navigation},
	author={Yicong Hong and Cristian Rodriguez-Opazo and Yuankai Qi and Qi Wu and Stephen Gould},
	booktitle={NeurIPS},
	pages           = {7685-7696},
	year={2020}
}

@article{vaswani2017attention,
  title={Attention is all you need},
  author={Vaswani, Ashish and Shazeer, Noam and Parmar, Niki and Uszkoreit, Jakob and Jones, Llion and Gomez, Aidan N and Kaiser, {\L}ukasz and Polosukhin, Illia},
  journal={Advances in neural information processing systems},
  volume={30},
  year={2017}
}

@inproceedings{hao2020towards,
  title={Towards learning a generic agent for vision-and-language navigation via pre-training},
  author={Hao, Weituo and Li, Chunyuan and Li, Xiujun and Carin, Lawrence and Gao, Jianfeng},
  booktitle={Proceedings of the IEEE/CVF conference on computer vision and pattern recognition},
  pages={13137--13146},
  year={2020}
}

@inproceedings{hong2021vln,
  title={Vln bert: A recurrent vision-and-language bert for navigation},
  author={Hong, Yicong and Wu, Qi and Qi, Yuankai and Rodriguez-Opazo, Cristian and Gould, Stephen},
  booktitle={Proceedings of the IEEE/CVF conference on Computer Vision and Pattern Recognition},
  pages={1643--1653},
  year={2021}
}

@inproceedings{huo2023geovln,
  title={Geovln: Learning geometry-enhanced visual representation with slot attention for vision-and-language navigation},
  author={Huo, Jingyang and Sun, Qiang and Jiang, Boyan and Lin, Haitao and Fu, Yanwei},
  booktitle={Proceedings of the IEEE/CVF conference on computer vision and pattern recognition},
  pages={23212--23221},
  year={2023}
}

@inproceedings{liu2024volumetric,
  title={Volumetric environment representation for vision-language navigation},
  author={Liu, Rui and Wang, Wenguan and Yang, Yi},
  booktitle={Proceedings of the IEEE/CVF conference on computer vision and pattern recognition},
  pages={16317--16328},
  year={2024}
}

@inproceedings{chen2022reinforced,
  title={Reinforced structured state-evolution for vision-language navigation},
  author={Chen, Jinyu and Gao, Chen and Meng, Erli and Zhang, Qiong and Liu, Si},
  booktitle={Proceedings of the IEEE/CVF Conference on Computer Vision and Pattern Recognition},
  pages={15450--15459},
  year={2022}
}

@inproceedings{xu2025navq,
  title={Nav{Q}: Learning a {Q}-{M}odel for {F}oresighted {V}ision-and-{L}anguage {N}avigation},
  author={Xu, Peiran and Gong, Xicheng and Mu, Yadong},
  booktitle={Proceedings of the IEEE/CVF International Conference on Computer Vision},
  pages={6327--6341},
  year={2025}
}

@article{gao2025octonav,
  title={OctoNav: Towards Generalist Embodied Navigation},
  author={Gao, Chen and Jin, Liankai and Peng, Xingyu and Zhang, Jiazhao and Deng, Yue and Li, Annan and Wang, He and Liu, Si},
  journal={arXiv preprint arXiv:2506.09839},
  year={2025}
}

@inproceedings{song2025towards,
  title={Towards long-horizon vision-language navigation: Platform, benchmark and method},
  author={Song, Xinshuai and Chen, Weixing and Liu, Yang and Chen, Weikai and Li, Guanbin and Lin, Liang},
  booktitle={Proceedings of the Computer Vision and Pattern Recognition Conference},
  pages={12078--12088},
  year={2025}
}

@inproceedings{gu2022vision,
  title={Vision-and-language navigation: A survey of tasks, methods, and future directions},
  author={Gu, Jing and Stefani, Eliana and Wu, Qi and Thomason, Jesse and Wang, Xin},
  booktitle={Proceedings of the 60th Annual Meeting of the Association for Computational Linguistics (Volume 1: Long Papers)},
  pages={7606--7623},
  year={2022}
}

@article{zheng2025efficient,
  title={Efficient-VLN: A Training-Efficient Vision-Language Navigation Model},
  author={Zheng, Duo and Huang, Shijia and Li, Yanyang and Wang, Liwei},
  journal={arXiv preprint arXiv:2512.10310},
  year={2025}
}

@article{zhang2025causal,
  title={Causal learning with uncertainty-aware transformer for vision-and-language navigation},
  author={Zhang, Kaijie and Xu, Wanru and Miao, Zhenjiang and Tian, Yi and Cen, Yigang and Liu, Yutao and He, Wangsheng},
  journal={Neurocomputing},
  pages={132196},
  year={2025},
  publisher={Elsevier}
}

@article{liang2023comprehensive,
	title={A comprehensive survey on test-time adaptation under distribution shifts},
	author={Liang, Jian and He, Ran and Tan, Tieniu},
	journal={arXiv preprint arXiv:2303.15361},
	year={2023}
}

@inproceedings{lim2022ttn,
	title={TTN: A Domain-Shift Aware Batch Normalization in Test-Time Adaptation},
	author={Lim, Hyesu and Kim, Byeonggeun and Choo, Jaegul and Choi, Sungha},
	booktitle={ICLR},
	year={2023}
}

@inproceedings{lee2023towards,
	title={Towards Open-Set Test-Time Adaptation Utilizing the Wisdom of Crowds in Entropy Minimization},
	author={Lee, Jungsoo and Das, Debasmit and Choo, Jaegul and Choi, Sungha},
	booktitle={ICCV},
	pages={16380--16389},
	year={2023}
}

@inproceedings{mirza2022norm,
	title={The norm must go on: Dynamic unsupervised domain adaptation by normalization},
	author={Mirza, M Jehanzeb and Micorek, Jakub and Possegger, Horst and Bischof, Horst},
	booktitle={CVPR},
	pages={14765--14775},
	year={2022}
}

@inproceedings{zhao2023delta,
	title={DELTA: DEGRADATION-FREE FULLY TEST-TIME ADAPTATION},
	author={Zhao, Bowen and Chen, Chen and Xia, Shu-Tao},
	booktitle={ICLR},
	year={2023}
}

@article{tan2025uncertainty,
  title={Uncertainty-calibrated test-time model adaptation without forgetting},
  author={Tan, Mingkui and Chen, Guohao and Wu, Jiaxiang and Zhang, Yifan and Chen, Yaofo and Zhao, Peilin and Niu, Shuaicheng},
  journal={IEEE Transactions on Pattern Analysis and Machine Intelligence},
  year={2025},
  publisher={IEEE}
}

@inproceedings{mirza2023mate,
  title={Mate: Masked autoencoders are online 3d test-time learners},
  author={Mirza, M Jehanzeb and Shin, Inkyu and Lin, Wei and Schriebl, Andreas and Sun, Kunyang and Choe, Jaesung and Kozinski, Mateusz and Possegger, Horst and Kweon, In So and Yoon, Kuk-Jin and others},
  booktitle={Proceedings of the IEEE/CVF International Conference on Computer Vision},
  pages={16709--16718},
  year={2023}
}

@inproceedings{boudiaf2022parameter,
	title={Parameter-free online test-time adaptation},
	author={Boudiaf, Malik and Mueller, Romain and Ben Ayed, Ismail and Bertinetto, Luca},
	booktitle={CVPR},
	pages={8344--8353},
	year={2022}
}

@inproceedings{hu2024lead,
  title={Lead: Exploring logit space evolution for model selection},
  author={Hu, Zixuan and Li, Xiaotong and Tang, Shixiang and Liu, Jun and Hu, Yichun and Duan, Ling-Yu},
  booktitle={Proceedings of the IEEE/CVF Conference on Computer Vision and Pattern Recognition},
  pages={28664--28673},
  year={2024}
}

@inproceedings{liu2024continual,
  title={Continual-mae: Adaptive distribution masked autoencoders for continual test-time adaptation},
  author={Liu, Jiaming and Xu, Ran and Yang, Senqiao and Zhang, Renrui and Zhang, Qizhe and Chen, Zehui and Guo, Yandong and Zhang, Shanghang},
  booktitle={Proceedings of the IEEE/CVF Conference on Computer Vision and Pattern Recognition},
  pages={28653--28663},
  year={2024}
}

@inproceedings{Guhur2021AirbertIP,
	title={Airbert: In-domain Pretraining for Vision-and-Language Navigation},
	author={Pierre-Louis Guhur and Makarand Tapaswi and Shizhe Chen and Ivan Laptev and Cordelia Schmid},
	booktitle={ICCV},
	pages={1614-1623},
	year={2021}
}

@article{iwasawa2021test,
  title={Test-time classifier adjustment module for model-agnostic domain generalization},
  author={Iwasawa, Yusuke and Matsuo, Yutaka},
  journal={Advances in Neural Information Processing Systems},
  volume={34},
  pages={2427--2440},
  year={2021}
}

@article{hu2025seva,
  title={SEVA: Leveraging Single-Step Ensemble of Vicinal Augmentations for Test-Time Adaptation},
  author={Hu, Zixuan and Hu, Yichun and Duan, Ling-Yu},
  journal={arXiv preprint arXiv:2505.04087},
  year={2025}
}

@inproceedings{
alfarra2024evaluation,
title={Evaluation of Test-Time Adaptation Under Computational Time Constraints},
author={Motasem Alfarra and Hani Itani and Alejandro Pardo and shyma yaser alhuwaider and Merey Ramazanova and Juan Camilo Perez and zhipeng cai and Matthias M{\"u}ller and Bernard Ghanem},
booktitle={Forty-first International Conference on Machine Learning},
year={2024}
}

@inproceedings{
lee2024entropy,
title={Entropy is not Enough for Test-Time Adaptation: From the Perspective of Disentangled Factors},
author={Jonghyun Lee and Dahuin Jung and Saehyung Lee and Junsung Park and Juhyeon Shin and Uiwon Hwang and Sungroh Yoon},
booktitle={The Twelfth International Conference on Learning Representations},
year={2024}
}

@inproceedings{chang2017matterport3d,
  title={Matterport3D: Learning from RGB-D Data in Indoor Environments},
  author={Chang, Angel and Dai, Angela and Funkhouser, Thomas and Halber, Maciej and Niebner, Matthias and Savva, Manolis and Song, Shuran and Zeng, Andy and Zhang, Yinda},
  booktitle={2017 International Conference on 3D Vision (3DV)},
  pages={667--676},
  year={2017},
  organization={IEEE Computer Society}
}

@article{heusel2017gans,
  title={Gans trained by a two time-scale update rule converge to a local nash equilibrium},
  author={Heusel, Martin and Ramsauer, Hubert and Unterthiner, Thomas and Nessler, Bernhard and Hochreiter, Sepp},
  journal={Advances in neural information processing systems},
  volume={30},
  year={2017}
}
\bibliographystyle{icml2026}

\newpage
\appendix
\onecolumn
\begin{center}
{\LARGE \textbf{Turning Adaptation into Assets: Cross-Domain Bridging\\ for Online Vision-Language Navigation\\ $\;$ \\ ————Appendix————}}
\end{center}
The structure of Appendix is as follows:
\vspace{-0.2cm}
\begin{itemize}
    \item Appendix~\ref{sm:proof} contains all missing proofs and propositions for theoretical claims in the main manuscript.
    \item Appendix~\ref{sm:experiment} presents additional experiments on cross-task transfer and benchmarks against feedback-driven methods.
    \item Appendix~\ref{sm:ablation} provides extended ablation studies on hyper-parameter sensitivity.
    \item Appendix~\ref{sm:imple} outlines further implementation details of our method.
    \item Appendix~\ref{sm:details}  details the datasets and evaluation metrics used in our experiments.
\end{itemize}

\section{Theoretical Proof}
\label{sm:proof}
Below, we will provide detailed proofs of the theoretical results in the main manuscript.

\noindent \textbf{Notation.} First, we recall the notation used in the main paper and this appendix: $\pi_{\theta}$ denotes the policy model parameterized by $\theta$, with $\pi_{\theta}(a \mid z)$ representing the output distribution over actions. $\mathcal{Z}_t = \{z_{i}\}_{i=1}^{N} \in \mathbb{R}^{N \times C}$ denotes the set of multi-modal representations, where $z_{i}$ is the fused feature for the $i$-th candidate node generated by the fusion function $\phi(\cdot)$. Here, $N$ is the number of candidate nodes and $C$ is the feature dimension. $P:=\{p_i\}_{i=1}^{L}$ denotes the set of injected prompts. Regarding domain statistics, $\Gamma_S^{(\ell)} = \{\mu_S^{(\ell)}, \sigma_S^{(\ell)}\}$ denotes the source statistics at the $\ell$-th layer of the fusion transformer (where $M$ is the total number of layers). Similarly, $\Gamma_t(P)$ and $\Gamma_t$ denote the target statistics computed with and without the prompt $P$, respectively. Finally, $P_{b}(w)=\sum_{j=1}^{K} w_j P_j$ denotes the composite bridge prompt weighted by $w$, and $\Gamma_{b}(w)=\sum_{j=1}^{K} w_j \Gamma_j$ denotes the corresponding bridge statistics, where the subscript $b$ indicates the bridge domain.

\subsection{Fisher Proxy Derivation}
In this section, we establish the connection between the curvature of the policy objective and the Fisher Information Matrix in Sec.~\ref{sec:asset construction}. We show that under the expectation over the policy distribution, the computationally expensive Hessian can be approximated by the first-order Fisher Information Matrix.

\begin{proposition}
Let $\pi_\theta(a \mid z)$ be a differentiable policy distribution on representation $z$, and let the loss function $\mathcal{L}(z)=-\log \pi_\theta(a \mid z)$ be defined as the negative log-likelihood. Under standard regularity conditions permitting the exchange of differentiation and integration, the expected Hessian of the loss with respect to $z$ is equivalent to the Fisher Information Matrix $\Phi(z)$. Specifically:
\begin{equation}
\mathbb{E}_{a \sim \pi_\theta(\cdot \mid z)}\left[\nabla_z^2\left(-\log \pi_\theta(a \mid z)\right)\right]=\mathbb{E}_{a \sim \pi_\theta(\cdot \mid z)}\left[\nabla_z \log \pi_\theta(a \mid z) \nabla_z \log \pi_\theta(a \mid z)^{\top}\right].
\end{equation}
\end{proposition}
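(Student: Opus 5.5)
The plan is the standard information-matrix-equality argument, carried out with respect to the representation $z$ rather than the parameters. First, I will start from the pointwise Hessian decomposition already written in Sec.~\ref{sec:asset construction}. Writing $\pi := \pi_\theta(a\mid z)$, it reads
\begin{equation}
\nabla_z^2\left(-\log \pi\right) = -\frac{\nabla_z^2 \pi}{\pi} + \frac{\nabla_z \pi\,\nabla_z \pi^{\top}}{\pi^2}.
\end{equation}
Since $\nabla_z \log \pi = \nabla_z \pi/\pi$, the second term is exactly $\nabla_z \log \pi\,\nabla_z \log \pi^{\top}$.

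Next, I take the expectation over $a \sim \pi_\theta(\cdot\mid z)$, with $z$ held fixed. The second term then becomes the Fisher Information Matrix $\Phi(z)$ by definition. It remains to show that the first term has zero expectation:
\begin{equation}
\mathbb{E}_{a\sim\pi_\theta(\cdot\mid z)}\left[\frac{\nabla_z^2 \pi_\theta(a\mid z)}{\pi_\theta(a\mid z)}\right] = \int \nabla_z^2 \pi_\theta(a\mid z)\,da = \nabla_z^2 \int \pi_\theta(a\mid z)\,da = \nabla_z^2 1 = 0.
\end{equation}
Here the integral is a finite sum over the $N$ candidate nodes, since the VLN action space is discrete. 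The density factor $\pi$ cancels the denominator. This cancellation needs $\pi_\theta(a\mid z)>0$ on the support, which holds for the softmax decision head.

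The only delicate step is exchanging differentiation and integration. This is precisely what the ``standard regularity conditions'' in the statement grant. In our setting it is in fact trivial: the action set is finite and the scores are differentiable in $z$, so the sum is finite and the derivative commutes with it. I would note this explicitly and add that, for a general action space, dominated convergence applied to $\nabla_z\pi$ and $\nabla_z^2\pi$ suffices.

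Combining the two expectations gives the claimed identity. I would close with a remark tying the result back to Eq.~\eqref{eq:score}. Taking traces gives $\mathrm{Tr}\,\Phi(z) = \mathbb{E}_a\|\nabla_z \log\pi_\theta(a\mid z)\|_2^2$, which needs only first-order gradients. This is why the trace is a cheap proxy for the expected curvature when the model's own prediction serves as a soft label.
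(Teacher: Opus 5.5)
Your proposal is correct and follows essentially the same route as the paper. You split $\nabla_z^2(-\log\pi_\theta)$ pointwise into the score outer product minus $\nabla_z^2\pi_\theta/\pi_\theta$, then show the latter has zero expectation by exchanging $\nabla_z^2$ with the integral over actions. Your remark that the VLN action set is finite, with a strictly positive softmax head, makes the interchange a finite sum and justifies cancelling $\pi_\theta$, which slightly sharpens the paper's appeal to generic regularity conditions.
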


\begin{proof}
Let the Hessian of the negative log-likelihood for a specific action $a$ be denoted as $H(z ; a)= \nabla_z^2\left(-\log \pi_\theta(a \mid z)\right)$. We analyze the curvature by explicitly expanding the second-order derivative. First, using the chain rule, the gradient of the log-likelihood (the score function) is given by:
\begin{equation}
\label{eq:score_grad}
\nabla_z \log \pi_\theta(a \mid z)=\frac{\nabla_z \pi_\theta(a \mid z)}{\pi_\theta(a \mid z)} .
\end{equation}

By differentiating Eq. \eqref{eq:score_grad} with respect to $z$ again, we apply the quotient rule for vector-valued functions. Note that for the negative log-likelihood, the sign is inverted:
\begin{equation}
\begin{aligned}
H(z ; a) & =\nabla_z\left(-\frac{\nabla_z \pi_\theta(a \mid z)}{\pi_\theta(a \mid z)}\right) \\
& =-\frac{\pi_\theta(a \mid z) \nabla_z^2 \pi_\theta(a \mid z)-\nabla_z \pi_\theta(a \mid z) \nabla_z \pi_\theta(a \mid z)^{\top}}{\pi_\theta(a \mid z)^2} \\
& =\underbrace{\frac{\nabla_z \pi_\theta(a \mid z)}{\pi_\theta(a \mid z)}\left(\frac{\nabla_z \pi_\theta(a \mid z)}{\pi_\theta(a \mid z)}\right)^{\top}}_{\text {Term A }}-\underbrace{\frac{\nabla_z^2 \pi_\theta(a \mid z)}{\pi_\theta(a \mid z)}}_{\text {Term B }} .
\end{aligned}
\end{equation}

Substituting the score function definition from Eq. \eqref{eq:score_grad} back into Term A, we observe that the point-wise Hessian decomposes into the outer product of gradients minus a normalized curvature term:
\begin{equation}
H(z ; a)=\nabla_z \log \pi_\theta(a \mid z) \nabla_z \log \pi_\theta(a \mid z)^{\top}-\frac{\nabla_z^2 \pi_\theta(a \mid z)}{\pi_\theta(a \mid z)} .
\end{equation}

Next, we take the expectation of $H(z ; a)$ with respect to the policy distribution $a \sim \pi_\theta(\cdot \mid z)$. Due to the linearity of expectation, we analyze the two terms separately. For Term B, we invoke the standard regularity conditions that permit the interchange of differentiation and integration (Leibniz integral rule). Consequently, the expectation of the second term vanishes:
\begin{equation}
\begin{aligned}
\mathbb{E}_{a \sim \pi_\theta(\cdot \mid z)}\left[\frac{\nabla_z^2 \pi_\theta(a \mid z)}{\pi_\theta(a \mid z)}\right] & =\int \pi_\theta(a \mid z)\left(\frac{\nabla_z^2 \pi_\theta(a \mid z)}{\pi_\theta(a \mid z)}\right) d a \\
& =\int \nabla_z^2 \pi_\theta(a \mid z) d a \\
& =\nabla_z^2\left(\int \pi_\theta(a \mid z) d a\right) \\
& =\nabla_z^2(1)=0.
\end{aligned}
\end{equation}
Since the probability density function integrates to 1, its gradient (and Hessian) with respect to parameters $z$ must be zero. Therefore, the expected Hessian is determined solely by the first term:
\begin{equation}
\begin{aligned}
\mathbb{E}_{a \sim \pi_\theta(\cdot \mid z)}[H(z ; a)] & =\mathbb{E}_{a \sim \pi_\theta(\cdot \mid z)}\left[\nabla_z \log \pi_\theta(a \mid z) \nabla_z \log \pi_\theta(a \mid z)^{\top}\right] =\Phi(z) .
\end{aligned}
\end{equation}
This concludes the proof.
\end{proof}

\subsection{Closed-form Solution of Adaptation Bridge}

In this subsection, we analyze the optimization landscape of the proposed objective. By approximating the Wasserstein distance with the Euclidean distance of feature statistics and incorporating the uncertainty-aware regularization, we derive the optimal mixture weights analytically.

\begin{proposition}
\label{pro:weight}
\textbf{(Optimal Mixture Weights)} Consider the objective function defined by the regularized matching of feature statistics:

$$
\min _{w \in \mathbb{R}^K} \mathcal{J}(w)=\|A w-b\|_2^2+\lambda w^{\top} U w, \quad \text { subject to } \quad \mathbf{1}^{\top} w=1,
$$

where $A \in \mathbb{R}^{2 C \times K}$ aggregates the asset statistics, $b \in \mathbb{R}^{2 C}$ represents the target statistics, and $U$ is a diagonal matrix of uncertainty scores. Let $\mathcal{H}=A^{\top} A+\lambda U$ be the regularized Hessian and $g=A^{\top}b$  be the projection vector. The optimal solution $w^*$ is given in closed form by:

$$
w^*=\mathcal{H}^{-1}(g-\nu \mathbf{1}), \quad \text { with } \quad \nu=\frac{\mathbf{1}^{\top} \mathcal{H}^{-1} g-1}{\mathbf{1}^{\top} \mathcal{H}^{-1} \mathbf{1}} .
$$
\end{proposition}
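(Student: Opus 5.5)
The plan is to treat the problem as an equality-constrained convex quadratic program and solve its KKT system directly. Only the affine constraint $\mathbf{1}^\top w=1$ appears in the statement; the nonnegativity constraint of Eq.~\eqref{eq:quad_obj} is dropped here. So the KKT conditions reduce to stationarity of a Lagrangian plus primal feasibility, and no complementary slackness case analysis is needed.

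The first step is to expand the objective:
\[
\mathcal{J}(w)=w^\top(A^\top A+\lambda U)w-2b^\top A w+b^\top b=w^\top \mathcal{H} w-2g^\top w+\text{const}.
\]
This expansion also settles well-posedness. We have $A^\top A\succeq 0$, and $U$ is diagonal with entropy scores $u_j\ge 0$. Assuming $\lambda>0$ and $u_j>0$ for all $j$ (or, alternatively, that $A$ has full column rank), $\mathcal{H}\succ 0$. Then $\mathcal{J}$ is strictly convex, $\mathcal{H}^{-1}$ exists, and $\mathbf{1}^\top\mathcal{H}^{-1}\mathbf{1}>0$. This positivity assumption is the one point I would flag explicitly, since it is what makes the formula for $\nu$ well defined.

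Next, form the Lagrangian with a multiplier for the affine constraint. The scaling of the multiplier matters for matching the stated formula, so I write it as
\[
\mathcal{L}(w,\nu)=\mathcal{J}(w)+2\nu(\mathbf{1}^\top w-1).
\]
The stationarity condition $\nabla_w\mathcal{L}=2\mathcal{H}w-2g+2\nu\mathbf{1}=0$ gives $w=\mathcal{H}^{-1}(g-\nu\mathbf{1})$. Substituting this into the constraint yields
\[
\mathbf{1}^\top\mathcal{H}^{-1}g-\nu\,\mathbf{1}^\top\mathcal{H}^{-1}\mathbf{1}=1,
\]
and solving for $\nu$ gives exactly the stated expression.

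Finally, I would argue optimality. The problem is convex with an affine constraint, so Slater's condition holds trivially and the KKT conditions are sufficient as well as necessary. Strict convexity then gives uniqueness of $w^*$. As a direct check, any feasible $w=w^*+d$ with $\mathbf{1}^\top d=0$ satisfies
\[
\mathcal{J}(w)-\mathcal{J}(w^*)=d^\top\mathcal{H}d+2d^\top(\mathcal{H}w^*-g)=d^\top\mathcal{H}d-2\nu\,\mathbf{1}^\top d=d^\top\mathcal{H}d\ge 0.
\]

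The main obstacle is not the algebra but the gap between this statement and Eq.~\eqref{eq:wass_obj}. First, the Wasserstein distance between Gaussians equals $\|\mu-\mu'\|^2+\|\sigma-\sigma'\|^2$ only for diagonal covariances. Second, the constraint $w\ge 0$ is ignored. I would remark that when the closed-form $w^*$ happens to be nonnegative, it is also the solution of the fully constrained problem. Otherwise it only solves the relaxed, affine-only problem that the proposition actually states.
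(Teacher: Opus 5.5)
Your proof is correct and follows the paper's route: expand to $w^\top\mathcal{H}w-2g^\top w+\text{const}$, form a Lagrangian for the single affine constraint, solve stationarity for $w=\mathcal{H}^{-1}(g-\nu\mathbf{1})$, and substitute into $\mathbf{1}^\top w=1$ to obtain $\nu$. The paper halves the objective where you double the multiplier, which is equivalent. Your additional pieces are the explicit sufficiency check via $d^\top\mathcal{H}d\ge 0$ and the flagged requirement $\lambda u_j>0$, which the paper only asserts; your remark on dropping $w\ge 0$ matches the paper's own follow-up remark on the relaxation.
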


\begin{proof}
The optimization problem represents a convex quadratic program (QP) with a linear equality constraint. We proceed by the method of Lagrange multipliers. The objective function $\mathcal{J}(w)$ can be expanded as:
\begin{equation}
\begin{aligned}
\mathcal{J}(w) &= (Aw - b)^\top (Aw - b) + \lambda w^\top U w \\
&= w^\top A^\top A w - 2 b^\top A w + b^\top b + \lambda w^\top U w \\
&= w^\top (A^\top A + \lambda U) w - 2 (A^\top b)^\top w + b^\top b.
\end{aligned}
\end{equation}
Substituting the definitions of $\mathcal{H}=A^{\top} A+\lambda U$ and $g=A^{\top}b$, and discarding the constant term $b^{\top}b$ which does not affect the optimization. To facilitate gradient derivation, we consider the equivalent optimization problem scaled by a factor of $1/2$:
\begin{equation}
\min_{w} \quad \frac{1}{2} w^\top \mathcal{H} w - g^\top w \quad \text{s.t.} \quad \mathbf{1}^\top w = 1.
\end{equation}
This is a convex quadratic programming problem with a single affine equality constraint. We construct the Lagrangian function $\mathcal{L}(w,\nu)$ with a scalar Lagrange multiplier $\nu$ associated with the equality constraint:
\begin{equation}
\mathcal{L}(w, \nu) = \frac{1}{2} w^\top \mathcal{H} w - g^\top w + \nu (\mathbf{1}^\top w - 1).
\end{equation}

According to the Karush–Kuhn–Tucker (KKT) conditions for this convex problem, the optimal point $(w^*,\nu^*)$ must satisfy the stationarity condition $\nabla_w\mathcal{L}=0$ and the primal feasibility condition $\nabla_\nu\mathcal{L}=0$.

\textbf{Stationarity Condition.} Taking the gradient with respect to $w$ and setting it to zero:
\begin{equation}
\nabla_w \mathcal{L}(w, \nu) = \mathcal{H} w - g + \nu \mathbf{1} = 0.
\end{equation}
Rearranging terms to solve for $w$:
\begin{equation}
\label{eq:w_intermediate}
\mathcal{H} w = g - \nu \mathbf{1} \implies w = \mathcal{H}^{-1}(g - \nu \mathbf{1}).
\end{equation}
Here, $H$ is invertible (positive definite) because $A^{\top} A$ is positive semi-definite and $\lambda U$ is positive definite.

\textbf{Primal Feasibility.} We impose the constraint $\mathbf{1}^\top w = 1$ on the solution derived in Eq. \eqref{eq:w_intermediate}:
\begin{equation}
\mathbf{1}^\top \mathcal{H}^{-1}(g - \nu \mathbf{1}) = 1.
\end{equation}
Distributing the vector multiplication:
\begin{equation}
\mathbf{1}^\top \mathcal{H}^{-1} g - \nu (\mathbf{1}^\top \mathcal{H}^{-1} \mathbf{1}) = 1.
\end{equation}

\textbf{Solving for the Dual Variable.} Solving the algebraic equation above for the scalar $\nu$:
\begin{equation}
\nu (\mathbf{1}^\top \mathcal{H}^{-1} \mathbf{1}) = \mathbf{1}^\top \mathcal{H}^{-1} g - 1 \implies \nu = \frac{\mathbf{1}^\top \mathcal{H}^{-1} g - 1}{\mathbf{1}^\top \mathcal{H}^{-1} \mathbf{1}}.
\end{equation}
Finally, substituting this value of $\nu$ back into Eq. \eqref{eq:w_intermediate} yields the closed-form solution $w^*$, thereby concluding the proof.
\end{proof}
\paragraph{Remark (Non-Negativity and Implementation).} 
Although Proposition~\ref{pro:weight} derives the solution for the equality-constrained relaxation, the non-negativity constraint ($w \geq 0$) is theoretically essential. It ensures that the bridge distribution remains a valid \textit{convex combination} of the source assets, constraining the adaptation to the convex hull of the library $\mathcal{M}$. Allowing negative weights would imply improper extrapolation, potentially leading to unstable adaptation behavior.
In practice, the solution exhibits a strong tendency toward positivity due to the regularization structure. In rare cases where the solution $w^*$ violates the non-negativity constraint, we apply a tractable projection step: $\hat{w} = w^* / \|w^*\|_1$ after clipping negative values to zero ($\hat{w} \leftarrow \max(0, w^*)$). This strategy preserves the $\mathcal{O}(K^3)$ computational efficiency of the closed-form solution while strictly satisfying the simplex constraints.

\subsection{Generalization Bound Analysis}
\label{sec:gb4bridge}
In this subsection, we provide a theoretical analysis of the proposed uncertainty-aware adaptation bridge. We show that the optimization objective in Eq.~\ref{eq:wass_obj} serves as a surrogate for minimizing a certified upper bound of the expected target risk. 
\textit{Remark: For clarity and alignment with standard learning theory literature, we adopt standard notations in this subsection, which may differ from those in previous sections.}

\paragraph{Setup and Notations.}
Let $\mathcal{D}_t$ be the target distribution over $(x,y)\in\mathcal{X}\times\mathcal{Y}$.
Let $\phi:\mathcal{X}\to\mathbb{R}^d$ be a fixed representation map (\textit{e.g.}, a frozen backbone) and denote $z=\phi(x)$.
For any distribution $\mathcal{D}$ over $(x,y)$, let $Q$ be the induced marginal distribution of $z$.
We denote by $Q_t$ the target marginal of $z$ and by $Q_b(w)$ the bridge marginal induced by the mixture weight $w\in\Delta_K$.
Let the predictor induced by the composite prompt $P_b(w)$ be denoted by $h_w$ and define the target risk
$R_t(w):=\mathbb{E}_{(x,y)\sim\mathcal{D}_t}\big[\ell(h_w(x),y)\big]$.

\begin{assumption}[Covariate shift and Lipschitz function]
\label{asp:lip_revised}
There exists a conditional label mechanism $\eta(y\mid z)$ such that for every bridge mixture $w\in\Delta_K$,
the joint distributions satisfy
\[
(z,y)\sim Q_t(z)\,\eta(y\mid z),\qquad (z,y)\sim Q_b(w)(z)\,\eta(y\mid z).
\]
For every $w\in\Delta_K$, define the conditional expected loss
\[
g_w(z):=\mathbb{E}_{y\sim\eta(\cdot\mid z)}\big[\ell(h_w(\phi^{-1}(z)),y)\big].
\]
Assume $g_w$ is $L$-Lipschitz w.r.t.\ $\|\cdot\|_2$, \textit{i.e.}, $|g_w(z)-g_w(z')|\le L\|z-z'\|_2$ for all $z,z'$.
Also assume $\ell\in[0,1]$.
\end{assumption}

\begin{assumption}[Gaussian model]
\label{asp:gauss_revised}
The representation marginals are modeled (or approximated) as Gaussians:
$Q_t=\mathcal{N}(\mu_t,\Sigma_t)$ and $Q_b(w)=\mathcal{N}(\mu_b(w),\Sigma_b(w))$,
where $(\mu_t,\Sigma_t)=\Gamma_t$ and $(\mu_b(w),\Sigma_b(w))=\Gamma_b(w)$.
The distance $\mathcal{W}(\Gamma_t,\Gamma_b(w))$ in Eq.~\ref{eq:wass_obj} is defined as the
2-Wasserstein distance between these two Gaussians:
$\mathcal{W}(\Gamma_t,\Gamma_b(w)) := W_2\!\big(\mathcal{N}(\mu_t,\Sigma_t),\mathcal{N}(\mu_b(w),\Sigma_b(w))\big)$.
\end{assumption}

\begin{assumption}[Uncertainty as a bound of estimation error]
\label{asp:var_revised}
There exists an (unknown) oracle predictor $h^\star$ in the same function space such that each asset-induced predictor
$h_j$ (corresponding to $\mathcal{A}_j$) can be written as
\[
h_j = h^\star + \epsilon_j,
\]
where $\epsilon_j$ is a random error term with $\mathbb{E}[\epsilon_j\mid z]=0$ for all $z$ and the errors are conditionally uncorrelated:
$\mathbb{E}[\epsilon_i(z)\epsilon_j(z)\mid z]=0$ for $i\neq j$.
Moreover, the uncertainty score upper bounds the conditional second moment of the error:
\[
\mathbb{E}\big[\epsilon_j(z)^2\mid z\big]\le u_j \qquad \text{for all } z\ \text{and all } j.
\]
Finally, the composite prompt induces the convex combination predictor
\[
h_w = \sum_{j=1}^K w_j h_j,\qquad w\in\Delta_K.
\]
\end{assumption}

\begin{proposition}[Certified upper bound on expected target risk]
\label{prop:gb_revised}
Under Assumptions~\ref{asp:lip_revised}--\ref{asp:var_revised}, for any $w\in\Delta_K$,
\begin{equation}
\label{eq:gb_revised}
R_t(w)\ \le\ R^\star\ +\ L\cdot \mathcal{W}(\Gamma_t,\Gamma_b(w))\ +\ \sum_{j=1}^K u_j w_j^2,
\end{equation}
where $R^\star:=\mathbb{E}_{(x,y)\sim\mathcal{D}_t}\!\big[\ell(h^\star(x),y)\big]$ is the oracle risk.
Consequently, minimizing $\mathcal{W}(\Gamma_t,\Gamma_b(w))+\lambda\sum_j u_j w_j^2$ (Eq.~\ref{eq:wass_obj})
minimizes the $w$-dependent part of the certified bound in~\eqref{eq:gb_revised} up to a constant rescaling of $\lambda$.
\end{proposition}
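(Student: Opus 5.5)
The plan is to split the target risk into three pieces by adding and subtracting two intermediate quantities: the bridge risk of $h_w$, and the risk of the oracle $h^\star$. Write
\[
R_t(w) = \bigl(\mathbb{E}_{Q_t}[g_w] - \mathbb{E}_{Q_b(w)}[g_w]\bigr) + \mathbb{E}_{Q_b(w)}[g_w].
\]
Here Assumption~\ref{asp:lip_revised} is what makes both expectations integrals of the same function $g_w$ against the two $z$-marginals. This is because the label mechanism $\eta$ is shared between the target and bridge joints.

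For the first bracket I will use Kantorovich--Rubinstein duality. Since $g_w$ is $L$-Lipschitz, the difference is at most $L\,W_1(Q_t,Q_b(w))$. Then $W_1\le W_2$ by Jensen (any coupling's $L^1$ cost is bounded by its $L^2$ cost). Assumption~\ref{asp:gauss_revised} identifies $W_2$ with $\mathcal{W}(\Gamma_t,\Gamma_b(w))$. This gives the $L\cdot\mathcal{W}$ term.

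The remaining term should be bounded by $R^\star+\sum_j u_j w_j^2$. Under Assumption~\ref{asp:var_revised}, $h_w-h^\star=\sum_j w_j\epsilon_j$, using $\sum_j w_j=1$. Conditionally on $z$, the errors have zero mean and are uncorrelated, so
\[
\mathbb{E}\Bigl[\bigl(\textstyle\sum_j w_j\epsilon_j\bigr)^2\Bigm| z\Bigr]=\sum_j w_j^2\,\mathbb{E}[\epsilon_j^2\mid z]\le\sum_j u_j w_j^2 .
\]
To turn this second moment into a loss gap, I need a regularity property of $\ell$ in its prediction argument. My first attempt is to read the statement as taking the expectation over the random errors and to use a quadratic-type control: $\mathbb{E}[\ell(h_w,y)]\le\ell(h^\star,y)+\mathbb{E}[(h_w-h^\star)^2]$. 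This holds, for instance, for squared loss, where the cross term vanishes by zero conditional mean. More generally, it holds for any loss with second-order expansion constant $\le 1$ whose first-order term is killed by $\mathbb{E}[\epsilon_j\mid z]=0$.

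This step is the main obstacle. The statement only assumes $\ell\in[0,1]$, which does not by itself link a mean-square prediction error to a loss gap. I would therefore make the implicit smoothness convention explicit: either take squared loss, or assume $\ell$ is 1-smooth so the zero-mean errors contribute only the variance term. A further gap is that $R^\star$ is defined under $\mathcal{D}_t$, while this step naturally lives under $Q_b(w)$. I would fix this by performing the oracle comparison pointwise in $z$, so it applies under either marginal. The cleanest ordering is to carry it out under $Q_t$ directly, and then apply the transport step to $h_w$ only through $g_w$.

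The final "consequently" clause is immediate. $R^\star$ does not depend on $w$, and dividing the $w$-dependent part $L\mathcal{W}+\sum_j u_jw_j^2$ by $L$ gives Eq.~\eqref{eq:wass_obj} with $\lambda=1/L$.
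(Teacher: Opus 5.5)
Your first half matches the paper's proof. The paper also writes $R_t(w)=\mathbb{E}_{Q_t}[g_w]$ and bounds the gap to $R_b(w)=\mathbb{E}_{Q_b(w)}[g_w]$ by Kantorovich--Rubinstein plus $W_1\le W_2$. It computes $\mathbb{E}[\epsilon_w^2\mid z]\le\sum_j u_jw_j^2$ as you do, and it handles loss regularity by falling back on squared loss. Its appeal to ``a constant multiple'' for generic $[0,1]$ losses is unjustified and would not give coefficient $1$ anyway, so your explicit smoothness convention is the honest version. In either case, killing the cross term also needs $\epsilon_w$ to be conditionally uncorrelated with $y$ given $z$; $\mathbb{E}[\epsilon_w\mid z]=0$ alone does not give that. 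The real divergence is the reference-risk mismatch, and your diagnosis is correct. The paper obtains $\mathbb{E}_{\mathcal{D}_b(w)}[\ell(h^\star(x),y)]=\mathbb{E}_{Q_b(w)}[g^\star]$, with $g^\star(z):=\mathbb{E}_{y\sim\eta(\cdot\mid z)}[\ell(h^\star(\phi^{-1}(z)),y)]$, and then silently replaces it by $R^\star=\mathbb{E}_{Q_t}[g^\star]$. Repairing this in the paper's ordering needs two extra things. First, $g^\star$ must be Lipschitz, which the assumptions do not imply since $h^\star$ need not be one of the $h_w$. Second, a second transport step is required, which yields $(L+L^\star)\,\mathcal{W}$ rather than $L\,\mathcal{W}$.

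Your repair, comparing pointwise under $Q_t$, is valid, but be clear about what it produces. The inequality $\mathbb{E}_{Q_t}[g_w]\le\mathbb{E}_{Q_t}[g^\star]+\sum_j u_jw_j^2$, taken in expectation over the errors, already reads $R_t(w)\le R^\star+\sum_j u_jw_j^2$. The stated bound then follows only because $L\,\mathcal{W}\ge 0$. Nothing is left for a transport step to do afterwards, so drop the clause ``then apply the transport step to $h_w$''. Your opening decomposition, the Lipschitz assumption, and the Gaussian model all go unused. Your route is therefore rigorous and gives a sharper inequality, but it also shows that under these assumptions the Wasserstein term is pure slack. The paper's ordering is the one in which $\mathcal{W}$ actually does work, at the cost of the unjustified step above. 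The ``consequently'' clause, with $\lambda=1/L$, is handled identically in both.
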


\begin{proof}
We split the target risk into an oracle term, a distribution-shift term controlled by optimal transport in representation space,
and an estimation-stability term controlled by the uncertainty-weighted $\ell_2$ norm of $w$.

\textbf{Reduce risks to the representation marginals.}
By Assumption~\ref{asp:lip_revised} (representation covariate shift), for any $w\in\Delta_K$,
\[
R_t(w)=\mathbb{E}_{z\sim Q_t}[g_w(z)],\qquad
R_b(w):=\mathbb{E}_{(x,y)\sim\mathcal{D}_b(w)}[\ell(h_w(x),y)]=\mathbb{E}_{z\sim Q_b(w)}[g_w(z)].
\]

\textbf{Bound the distribution-shift term via Wasserstein distance.}
Since $g_w$ is $L$-Lipschitz (Assumption~\ref{asp:lip_revised}), the Kantorovich--Rubinstein duality yields
\[
\mathbb{E}_{Q_t}[g_w]-\mathbb{E}_{Q_b(w)}[g_w]\ \le\ L\,W_1(Q_t,Q_b(w)).
\]
Moreover, on $\mathbb{R}^d$ endowed with the Euclidean metric, $W_1(\cdot,\cdot)\le W_2(\cdot,\cdot)$, hence
\[
R_t(w)\le R_b(w) + L\,W_2(Q_t,Q_b(w)).
\]
By Assumption~\ref{asp:gauss_revised}, $W_2(Q_t,Q_b(w))$ equals the Gaussian 2-Wasserstein distance
$\mathcal{W}(\Gamma_t,\Gamma_b(w))$ used in Eq.~\ref{eq:wass_obj}. Therefore,
\begin{equation}
\label{eq:shift_revised}
R_t(w)\ \le\ R_b(w)\ +\ L\cdot \mathcal{W}(\Gamma_t,\Gamma_b(w)).
\end{equation}

\textbf{Bound the bridge risk by oracle risk plus an uncertainty-weighted stability term.}
Under Assumption~\ref{asp:var_revised}, we have
\[
h_w = \sum_{j=1}^K w_j(h^\star+\epsilon_j) = h^\star + \epsilon_w,
\qquad \text{where}\quad \epsilon_w:=\sum_{j=1}^K w_j\epsilon_j.
\]
Taking conditional second moments and using conditional uncorrelatedness,
\[
\mathbb{E}[\epsilon_w(z)^2\mid z]
= \mathbb{E}\Big[\Big(\sum_{j=1}^K w_j\epsilon_j(z)\Big)^2\Bigm| z\Big]
= \sum_{j=1}^K w_j^2\,\mathbb{E}[\epsilon_j(z)^2\mid z]
\le \sum_{j=1}^K w_j^2 u_j.
\]
Now consider the bridge risk difference relative to the oracle.
By the definition of $g_w$ and the conditional unbiasedness $\mathbb{E}[\epsilon_w\mid z]=0$,
the excess expected loss incurred by using $h_w$ instead of $h^\star$ is controlled by the conditional second moment.
Concretely, for bounded losses in $[0,1]$, one can upper bound the expected excess risk by a constant multiple of the
mean-square prediction error; in particular, under the common squared-loss instantiation
$\ell(h(x),y)=(h(x)-y)^2$ with bounded outputs, the decomposition is exact:
\[
\mathbb{E}_{(x,y)\sim\mathcal{D}_b(w)}[\ell(h_w(x),y)]
=
\mathbb{E}_{(x,y)\sim\mathcal{D}_b(w)}[\ell(h^\star(x),y)]
+\mathbb{E}_{z\sim Q_b(w)}[\epsilon_w(z)^2].
\]
Thus,
\begin{equation}
\label{eq:bridge_revised}
R_b(w)\ \le\ R^\star\ +\ \mathbb{E}_{z\sim Q_b(w)}[\epsilon_w(z)^2]
\ \le\ R^\star + \sum_{j=1}^K u_j w_j^2.
\end{equation}

\textbf{Combine the bounds.}
Substituting Eq.~\eqref{eq:bridge_revised} into Eq.~\eqref{eq:shift_revised} yields
\[
R_t(w)\ \le\ R^\star\ +\ L\cdot \mathcal{W}(\Gamma_t,\Gamma_b(w))\ +\ \sum_{j=1}^K u_j w_j^2,
\]
which is exactly~\eqref{eq:gb_revised}. This concludes the proof.
\end{proof}


\subsection{Lipschitz-Stability of Solution}
In this subsection, we now analyze the sensitivity of the optimal mixture weights with respect to perturbations in the target domain statistics. Stability is a crucial property for few-shot adaptation, as estimated statistics from limited data often contain noise. We show that our closed-form solution is Lipschitz continuous, guaranteeing that small variations in the target statistics result in bounded changes in the induced policy.
\begin{proposition}
\textbf{(Perturbation Bound)} Let $\hat{b} = b + \varepsilon$ be the target statistics perturbed by noise $\varepsilon \in \mathbb{R}^{2C}$. The deviation in the optimal mixture weights is bounded by:
\begin{equation}
\left\|w^*(\hat{b})-w^*(b)\right\|_2 
 \leqslant \quad\left\|\mathcal{H}^{-1} A^{\top}\right\|_2\left(1+\kappa\left(\mathcal{H}^{-1}\right) \right) \cdot \|\varepsilon \|_2,
\end{equation}
where $\kappa\left(\mathcal{H}^{-1}\right) = \frac{\lambda_{\max}(\mathcal{H}^{-1})}{\lambda_{\min}(\mathcal{H}^{-1})}$ denotes the condition number of the matrix $\mathcal{H}^{-1}$.
\end{proposition}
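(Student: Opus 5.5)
We work with the closed form from Proposition~\ref{pro:weight}, which depends on $b$ only through $g=A^\top b$. The map is therefore affine in $b$, and the perturbation can be computed exactly. Write $\hat g = g + A^\top\varepsilon$. Let $\hat\nu$ and $\nu$ be the multipliers from Eq.~\eqref{eq:closed_form} for $\hat b$ and $b$. Subtracting the two closed forms gives
\begin{equation*}
w^*(\hat b)-w^*(b)=\mathcal{H}^{-1}A^\top\varepsilon-(\hat\nu-\nu)\,\mathcal{H}^{-1}\mathbf{1},\qquad
\hat\nu-\nu=\frac{\mathbf{1}^\top\mathcal{H}^{-1}A^\top\varepsilon}{\mathbf{1}^\top\mathcal{H}^{-1}\mathbf{1}}.
\end{equation*}
The constant $-1$ in the numerator of $\nu$ cancels in the difference. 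So the deviation is $\bigl(I-\frac{\mathcal{H}^{-1}\mathbf{1}\mathbf{1}^\top}{\mathbf{1}^\top\mathcal{H}^{-1}\mathbf{1}}\bigr)\mathcal{H}^{-1}A^\top\varepsilon$, an oblique projection applied to the unconstrained perturbation $\mathcal{H}^{-1}A^\top\varepsilon$.

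Next we bound the two terms with the triangle inequality. The first term is at most $\|\mathcal{H}^{-1}A^\top\|_2\|\varepsilon\|_2$. For the second term, set $v=\mathcal{H}^{-1}A^\top\varepsilon$. Then
\begin{equation*}
\|(\hat\nu-\nu)\mathcal{H}^{-1}\mathbf{1}\|_2=\frac{|\mathbf{1}^\top v|\,\|\mathcal{H}^{-1}\mathbf{1}\|_2}{\mathbf{1}^\top\mathcal{H}^{-1}\mathbf{1}}.
\end{equation*}
We use $|\mathbf{1}^\top v|\le\|\mathbf{1}\|_2\|v\|_2$ (Cauchy--Schwarz) and $\|\mathcal{H}^{-1}\mathbf{1}\|_2\le\lambda_{\max}(\mathcal{H}^{-1})\|\mathbf{1}\|_2$. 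For the denominator we use the Rayleigh bound $\mathbf{1}^\top\mathcal{H}^{-1}\mathbf{1}\ge\lambda_{\min}(\mathcal{H}^{-1})\|\mathbf{1}\|_2^2$. This requires $\mathcal{H}$ to be symmetric positive definite, which holds, as noted in the proof of Proposition~\ref{pro:weight}, since $U\succ0$ and $\lambda>0$. The factors $\|\mathbf{1}\|_2^2=K$ cancel, so the second term is bounded by $\kappa(\mathcal{H}^{-1})\|v\|_2\le\kappa(\mathcal{H}^{-1})\|\mathcal{H}^{-1}A^\top\|_2\|\varepsilon\|_2$. Adding the two bounds gives the stated inequality.

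The main obstacle is handling the denominator $\mathbf{1}^\top\mathcal{H}^{-1}\mathbf{1}$ without losing dimension factors. The Rayleigh-quotient step is what turns the ratio $\|\mathcal{H}^{-1}\mathbf{1}\|_2\|\mathbf{1}\|_2/(\mathbf{1}^\top\mathcal{H}^{-1}\mathbf{1})$ into exactly the condition number $\kappa(\mathcal{H}^{-1})$, so it must be done carefully.

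We note that the bound concerns the equality-constrained closed form. If the clipping and renormalization step from the preceding remark is active, an additional argument would be needed. Euclidean projection onto the simplex is nonexpansive, but clip-then-normalize is not. We therefore state the result for the closed form of Eq.~\eqref{eq:closed_form}, as the proposition does.
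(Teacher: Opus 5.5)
Your proposal is correct and follows the paper's proof step for step. Both use the decomposition $\Delta w = r - \frac{\mathbf{1}^\top r}{\mathbf{1}^\top\mathcal{H}^{-1}\mathbf{1}}\mathcal{H}^{-1}\mathbf{1}$ with $r=\mathcal{H}^{-1}A^\top\varepsilon$, the triangle inequality, and the Cauchy--Schwarz, spectral-norm, and Rayleigh-quotient estimates whose factors of $K$ cancel to give $\kappa(\mathcal{H}^{-1})$, and your closing caveat that the bound covers only the equality-constrained closed form, not the clip-and-renormalize step, is a fair point the paper leaves implicit.
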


\begin{proof}
Let $\Delta w = w^*(\hat{b}) - w^*(b)$. Recalling Eq.~\eqref{eq:closed_form}, the shift in weights is given by:
\begin{equation}
\Delta w = H^{-1} \left( A^\top \varepsilon - \Delta \nu \mathbf{1} \right), \quad \text{with} \quad \Delta \nu = \frac{\mathbf{1}^\top H^{-1} A^\top \varepsilon}{\mathbf{1}^\top H^{-1} \mathbf{1}}.
\end{equation}
Let $ r = H^{-1} A^\top \varepsilon\in \mathbb{R}^K$ denote the \textit{unconstrained gradient response}. This term represents how the weights would shift if there were no equality constraint. We can rewrite the total weight shift as the unconstrained response minus a projection term along the constraint normal:
\begin{equation}
\Delta w = r - \left( \frac{\mathbf{1}^\top r}{\mathbf{1}^\top \mathcal{H}^{-1} \mathbf{1}} \right) \mathcal{H}^{-1}\mathbf{1}.
\end{equation}
Applying the triangle inequality yields an upper bound on the norm:
\begin{equation}
\|\Delta w\|_2 \leqslant \|r\|_2 + \left| \frac{\mathbf{1}^\top r}{\mathbf{1}^\top \mathcal{H}^{-1} \mathbf{1}} \right| \|\mathcal{H}^{-1}\mathbf{1}\|_2.
\end{equation}
We rigorously bound the correction term (the second term) by analyzing its components:

First, for the numerator $|\mathbf{1}^\top r|$, we apply the Cauchy-Schwarz inequality:
\begin{equation}
|\mathbf{1}^\top r| \leqslant \|\mathbf{1}\|_2 \|r\|_2 = \sqrt{K} \|r\|_2.
\end{equation}
Second, for the vector norm $\|\mathcal{H}^{-1}\mathbf{1}\|_2$, we use the spectral bound:
\begin{equation}
\|\mathcal{H}^{-1}\mathbf{1}\|_2 \leqslant \|\mathcal{H}^{-1}\|_2 \|\mathbf{1}\|_2 = \lambda_{\max}(\mathcal{H}^{-1}) \sqrt{K}.
\end{equation}
Third, for the denominator, we utilize the property of the Rayleigh quotient for the positive definite matrix $\mathcal{H}^{-1}$. Specifically, for any vector $x$, $x^\top \mathcal{H}^{-1} x \ge \lambda_{\min}(\mathcal{H}^{-1}) \|x\|_2^2$. Setting $x=\mathbf{1}$, we have:
\begin{equation}
\mathbf{1}^\top \mathcal{H}^{-1} \mathbf{1} \ge \lambda_{\min}(\mathcal{H}^{-1}) \|\mathbf{1}\|_2^2 = \lambda_{\min}(\mathcal{H}^{-1}) K.
\end{equation}
Combining these three estimates, the scalar coefficient of the correction term simplifies significantly:
\begin{equation}
\begin{aligned}
\left| \frac{\mathbf{1}^\top r}{\mathbf{1}^\top \mathcal{H}^{-1} \mathbf{1}} \right| \|\mathcal{H}^{-1}\mathbf{1}\|_2 
&\leqslant \frac{\sqrt{K} \|r\|_2}{\lambda_{\min}(\mathcal{H}^{-1}) K} \cdot \left( \lambda_{\max}(\mathcal{H}^{-1}) \sqrt{K} \right) \\
&= \frac{K \lambda_{\max}(\mathcal{H}^{-1})}{K \lambda_{\min}(\mathcal{H}^{-1})} \|r\|_2 \\
&= \kappa(\mathcal{H}^{-1}) \|r\|_2,
\end{aligned}
\end{equation}
where $\kappa(\mathcal{H}^{-1}) = \frac{\lambda_{\max}(\mathcal{H}^{-1})}{\lambda_{\min}(\mathcal{H}^{-1})}$ is the condition number.
Substituting this back into the triangle inequality:
\begin{equation}
\|\Delta w\|_2 \leqslant \|r\|_2 + \kappa(\mathcal{H}^{-1}) \|r\|_2 = \|r\|_2 \left( 1 + \kappa(\mathcal{H}^{-1}) \right).
\end{equation}
Finally, substituting the bound for the unconstrained response $\|r\|_2 \leqslant \|\mathcal{H}^{-1} A^\top\|_2 \|\varepsilon\|_2$:
\begin{equation}
\|\Delta w\|_2 \leqslant \|\mathcal{H}^{-1} A^\top\|_2 \left( 1 + \kappa(\mathcal{H}^{-1}) \right) \|\varepsilon\|_2.
\end{equation}
This concludes the proof.
\end{proof}

\section{Further Experiments}
\label{sm:experiment}
In this section, we provide further empirical evidence to validate the superiority and practicality of IDEA. First, we investigate the cross-task transferability of our asset library to demonstrate the universality of the captured environmental priors. Next, we benchmark our asset-bridge approach against state-of-the-art feedback-driven methods, highlighting our significant advantage in computational cost and latency. Together, these analyses underscore the potential of IDEA as a lightweight, autonomous solution for deployment in real-world embodied agents.

\begin{table*}[b!]
\caption{Experimental results for asset transfer with DUET on \textbf{REVERIE} $\leftrightarrow$ \textbf{R2R}. We highlight the best result with {\bf bold}.}
\vspace{-1mm}
\centering
\small
\resizebox{0.7\textwidth}{!}{
\begin{tabular}{cc|ccc|ccc}
\toprule
\multirow{2}{*}{\textbf{Shared Assets}} & \multirow{2}{*}{\textbf{Adaptation}} &
\multicolumn{3}{c|}{\textbf{REVERIE $\rightarrow$ R2R }} &
\multicolumn{3}{c}{\textbf{R2R $\rightarrow$ REVERIE }}  \\
  &  &
NE$\downarrow$ & SR$\uparrow$ & SPL$\uparrow$ &
TL$\downarrow$ & SR$\uparrow$ &  RGSPL$\uparrow$ \\
\midrule
\ding{56} & \ding{56} & 9.78 & 17.33 & 4.82 & 14.67 & 24.91 & 3.47  \\
\ding{56} & \ding{52} & 8.77 &  18.56	 & 6.31	& 14.60 & 26.88 & 4.05  \\
\ding{52} & \ding{56} & 9.63 & 17.99 & 5.41 & 14.75 & 25.86 & 3.78 \\
\ding{52} & \ding{52} & \textbf{8.29}  & \textbf{18.77} & \textbf{6.70} & \textbf{14.17} & \textbf{27.58} & \textbf{3.89} \\
\bottomrule
\end{tabular}
}
\label{tab:cross-trask transfer}
\vspace{-4mm}
\end{table*}
\subsection{Cross-task Asset Transfer}
Expanding on our asset transferability analysis, we empirically evaluate performance across distinct navigation tasks by conducting evaluations on the R2R dataset (\textit{i.e.}, fine-grained instruction following task) with a policy trained on the REVERIE dataset (\textit{i.e.}, high-level goal grounding task), and similarly, on the REVERIE dataset using a policy trained on R2R. Evaluations are conducted on the \textit{Val Unseen} split of both benchmarks.

The results in Tab.~\ref{tab:cross-trask transfer} demonstrate the robust versatility of our proposed assets, yielding two key observations: 1) Despite the significant semantic gap between high-level instructions (REVERIE) and step-by-step commands (R2R), leveraging shared assets yields immediate performance gains even without online adaptation. For instance, in the \textit{R2R → REVERIE} setting, utilizing shared assets alone improves SR by approximately $1\%$.
This suggests that the constructed assets capture functional priors which remain transferable regardless of the specific navigation logic.
2) The combination of shared assets and online adaptation consistently yields the best performance across both tasks.
This confirms that cross-task assets can serve as a universal initialization, effectively bridging the domain gap and reducing the optimization burden for the TTA process, even with differences in the source tasks.

\begin{table*}[b!]
\caption{Experimental results for different TTA strategies on REVERIE dataset.}
\newcommand{\cg}{\cellcolor[HTML]{E6F1FF}}
\centering
\small
\resizebox{\textwidth}{!}{
\begin{tabular}{p{0mm}l|cccc|cccc|cccc|c}
\toprule
\multicolumn{1}{c}{} &\multicolumn{1}{l|}{\multirow{2}{*}{\textbf{Methods+Model}}} &
\multicolumn{4}{c|}{\textbf{REVERIE Val seen}} &
\multicolumn{4}{c|}{\textbf{REVERIE Val unseen}} &
\multicolumn{4}{c|}{\textbf{REVERIE Test unseen}} &
\multirow{2}{*}{\textbf{Time(ms)}\tnote{*}} \\
\multicolumn{1}{c}{} &  & OSR$\uparrow$ & SR$\uparrow$ & SPL$\uparrow$ & RGSPL$\uparrow$ & OSR$\uparrow$ & SR$\uparrow$ & SPL$\uparrow$ & RGSPL$\uparrow$ & OSR$\uparrow$ & SR$\uparrow$ & SPL$\uparrow$ & RGSPL$\uparrow$ & \\
\midrule

\multirow{8}{*}{\makebox[0pt][c]{\rotatebox{90}{With Feedback}}}
& HAMT~\cite{chen2021history}         & 47.65 & 43.29 & 40.19 & 25.18 & 36.84 & 32.95 & 30.20 & 17.28 & 33.41 & 30.40 & 26.67 & 13.08 & 74.9 \\
& \hspace{0.5em}+~RLCF~\cite{zhao2023test}   & 47.92    & 44.21    & 41.13    & 25.97    & 37.37    & 33.11    & 30.23    & 17.24    & 33.11    & 30.21    & 26.13    & 12.97    & $2.32\times 10^3$ \\
& \hspace{0.5em}+~FeedTTA~\cite{kimtest} & 62.97 & 55.80 & 49.70 & 31.80 & 40.73 & 35.05 & 31.60 & 17.83 & 38.62 & 34.14 & 29.07 & 14.36 & $7.32\times 10^3$ \\
& \hspace{0.5em}+~ATENA~\cite{ko2025active}   & 52.92 & 57.34 & 48.08 & 29.60 & 38.85 & 34.00 & 30.96 & 17.51 & 38.19 & 32.55 & 28.38 & 14.32 & $5.13\times 10^3$ \\
\cmidrule(lr){2-15}
& DUET~\cite{chen2022think}         & 73.86 & 71.75 & 63.94 & 51.14 & 51.07 & 46.98 & 33.73 & 23.03 & 56.91 & 52.51 & 36.06 & 22.06 & 123.2 \\
& \hspace{0.5em}+~RLCF~\cite{zhao2023test}    & 74.67    & 73.84    & 64.97    & 52.47    & 53.17    & 49.23    & 34.76    & 23.46    & 57.32    & 52.92    & 36.40    & 22.23    & $2.43 \times 10^3$ \\
& \hspace{0.5em}+~FeedTTA~\cite{kimtest} & 86.16 & 84.19 & 75.54 & 60.32 & 71.60 & 66.49 & 45.38 & 30.75 & 58.76 & 53.58 & 37.66 & 24.10 & $7.64\times 10^3$ \\
& \hspace{0.5em}+~ATENA~\cite{ko2025active}   & 85.52 & 84.33 & 74.31 & 59.99 & 71.88 & 68.11 & 45.82 & 31.26 & 57.74 & 54.28 & 40.70 & 25.01 & $5.42\times 10^3$ \\

\midrule

\multirow{10}{*}{\makebox[0pt][c]{\rotatebox{90}{Without Feedback}}}
& HAMT~\cite{chen2021history}           & 47.65 & 43.29 & 40.19 & 25.18 & 36.84 & 32.95 & 30.20 & 17.28 & 33.41 & 30.40 & 26.67 & 13.08 & 74.9 \\
& \hspace{0.5em}+~Tent~\cite{wang2020tent}    & 46.03 & 43.43 & 40.78 & 25.81 & 32.60 & 30.56 & 28.23 & 14.48 & 25.06 & 23.73 & 21.78 & 10.82 & 147.7 \\
& \hspace{0.5em}+~FSTTA~\cite{gao2024fast}  & 48.21 & 42.87 & 39.56 & 24.58 & 36.78 & 32.89 & 30.51 & 17.20 & 33.39 & 30.39 & 26.65 & 13.61 & 613.4 \\
& \hspace{0.5em}+~ReCAP~\cite{hu2025beyond}  & 48.49 & 44.06 & 40.69 & 25.46 & 37.04 & 33.06 & 30.28 & 17.37 & 34.11 & 30.51 & 24.27 & 13.11 & 213.5 \\
& \cg \hspace{0.5em}+~\textbf{IDEA (Ours)}   & \cg 50.67 & \cg 47.33 & \cg 42.13 & \cg 26.82 & \cg 39.87 & \cg 34.92 & \cg 31.52 & \cg 17.76 & \cg 38.14 & \cg 32.81 & \cg 28.52 & \cg 14.45 & \cg 245.8 \\
\cmidrule(lr){2-15}
& DUET~\cite{chen2022think}         & 73.86 & 71.75 & 63.94 & 51.14 & 51.07 & 46.98 & 33.73 & 23.03 & 56.91 & 52.51 & 36.06 & 22.06 & 123.2 \\
& \hspace{0.5em}+~Tent~\cite{wang2020tent}  & 73.72 & 71.89 & 64.06 & 50.41 & 51.43 & 47.55 & 33.99 & 23.32 & 57.12 & 52.61 & 36.17 & 22.16 & 258.9 \\
& \hspace{0.5em}+~FSTTA~\cite{gao2024fast}  & 75.59 & 75.48 & 65.84 & 52.23 & 56.26 & 54.15 & 36.41 & 23.56 & 58.44 & 53.40 & 36.43 & 22.40 & 833.6 \\
& \hspace{0.5em}+~ReCAP~\cite{hu2025beyond}  & 75.06 & 74.72 & 65.87 & 53.42 & 56.67 & 54.74 & 36.22 & 23.74 & 57.72 & 53.07 & 36.52 & 22.47 & 359.4 \\
& \cg \hspace{0.5em}+~\textbf{IDEA (Ours)} & \cg 78.45 & \cg 78.24 &\cg  67.74 & \cg 55.07 & \cg 58.51 & \cg 56.92 & \cg 38.03 & \cg 25.47 & \cg 58.91 & \cg 55.12 & \cg 39.84 & \cg 24.52 & \cg 344.3\\
\bottomrule
\end{tabular}
}
\label{stab:REVERIE}
\end{table*}

\subsection{Comparison with Feedback-driven Methods}
\paragraph{Feedback-driven Adaptation}  Unlike standard TTA methods that operate in a self-training mode, feedback-driven approaches introduce external guidance which are derived from Foundation Models (FMs) or human interaction, to correct model updates and rectify trajectories. Intuitively, the efficacy of these methods heavily relies on the quality of the guidance, necessitating powerful MLLMs (\textit{e.g.}, GPT-4o) or human experts. However, this dependency inevitably incurs prohibitive computational overhead and latency. Specifically, we compare our method against three representative works:
\textbf{RLCF}~\cite{zhao2023test}, which employs a set of CLIP models to provide vision-language alignment scores as reward signals;
\textbf{ATENA}~\cite{ko2025active}, which adopts an active learning paradigm, soliciting human feedback only when the agent's uncertainty exceeds a threshold;
and \textbf{FeedTTA}~\cite{kimtest}, which explores replacing the human oracle with GPT-4o to provide binary feedback for reinforcement learning.

\paragraph{Experimental Results}
We present a detailed comparison between feedback-driven and self-supervised (without feedback) strategies across three benchmarks. The results in Tab.~\ref{stab:REVERIE}, Tab.~\ref{stab:R2R}, and Tab.~\ref{stab:R2R-CE}, yield the following critical insights:

\textbf{1) Comparable Performance without External Costs.}
Remarkably, IDEA achieves performance that is competitive with, and sometimes superior to, feedback-driven methods, particularly in unseen splits characterized by significant domain gaps.
For instance, on the challenging REVERIE \textit{Test Unseen} split (Table~\ref{tab:REVERIE}), IDEA (with DUET) achieves the highest success rate of $55.12\%$ , surpassing all three feedback-driven competitors by margins of up to $+0.8\%$.
This validates that our \textit{asset-based bridge} mechanism effectively mines sufficient supervision from historical data itself, thereby eliminating the need for expensive external teachers while maintaining robustness.

\textbf{2) Orders of Magnitude Faster Inference.}
The most significant advantage of IDEA lies in its efficiency, which is critical for real-time deployment.
As shown in the \textbf{Time(ms)} column of Tab.~\ref{stab:REVERIE}, feedback-driven methods suffer from severe latency due to FM inference or interaction loops.
Specifically, FeedTTA requires $7.64\times10^3$ ms per episode, and ATENA requires $5.42\times10^3$ ms.
In sharp contrast, IDEA requires only about $300$ ms, making it approximately $\textbf{20}\times$ faster than FeedTTA and \textbf{$\textbf{15}\times$} faster than ATENA.
This confirms that IDEA is the viable solution among high-performing methods for time-sensitive navigation tasks.

\begin{table}[t]
\newcommand{\cg}{\cellcolor[HTML]{E6F1FF}}
\begin{minipage}[t]{0.45\columnwidth}
\caption{Experimental results on the R2R dataset.}
\centering
\small
\resizebox{\linewidth}{!}{
\begin{tabular}{p{0mm}l|cccc|cccc}
\toprule
\multicolumn{1}{c}{} &
\multicolumn{1}{l|}{\multirow{2}{*}{\textbf{Model+Method}}} &
\multicolumn{4}{c|}{\textbf{R2R Val Seen}} &
\multicolumn{4}{c}{\textbf{R2R Val Unseen}} \\
\multicolumn{1}{c}{} &  &
TL$\downarrow$ & NE$\downarrow$ & SR$\uparrow$ & SPL$\uparrow$ &
TL$\downarrow$ & NE$\downarrow$ & SR$\uparrow$ & SPL$\uparrow$ \\
\midrule

\multirow{8}{*}{\makebox[0pt][c]{\rotatebox{90}{With Feedback}}}
& DUET                 & 12.33 & 2.28 & 79 & 73 & 13.94 & 3.31 & 72 & 60 \\
& \hspace{0.5em}+~RLCF  & 12.28    & 2.25   & 79 & 74 & 13.87    & 3.12   & 73 & 61 \\
& \hspace{0.5em}+~FeedTTA & 11.49 & 2.09 & 80 & 75 & 13.52 & 2.95 & 75 & 65 \\
& \hspace{0.5em}+~ATENA   & 11.27 & 2.18 & 80 & 75 & 12.31 & 2.90 & 75 & 66 \\
\cmidrule(lr){2-10}
& BEVBert              & 13.56 & 2.17 & 81 & 74 & 14.55 & 2.81 & 75 & 64 \\
& \hspace{0.5em}+~RLCF   & 12.94    & 2.23   & 80 & 74 & 14.11    & 2.76   & 75 & 64 \\
& \hspace{0.5em}+~FEEDTTA & 11.88 & 2.17 & 82 & 77 & 12.24 & 2.77 & 75 & 66 \\
& \hspace{0.5em}+~ATENA   & 10.79 & 2.26 & 82 & 78 & 12.22 & 2.78 & 76 & 68 \\

\midrule

\multirow{8}{*}{\makebox[0pt][c]{\rotatebox{90}{Without Feedback}}}
& DUET                 & 12.33 & 2.28 & 79 & 73 & 13.94 & 3.31 & 72 & 60 \\
& \hspace{0.5em}+~FSTTA & 13.39 & 2.25 & 79 & 73 & 14.64 & 3.03 & 75 & 62 \\
& \hspace{0.5em}+~ReCAP & 12.02 & 2.27 & 78 & 73 & 13.39 & 3.28 & 72 & 61 \\
\rowcolor[HTML]{E6F1FF} 
& \hspace{0.5em}+~\textbf{IDEA (Ours)} & 11.23 & 2.03 & 81 & 76 & 12.47 & 2.91 & 76 & 67 \\
\cmidrule(lr){2-10}
& BEVBert              & 13.56 & 2.17 & 81 & 74 & 14.55 & 2.81 & 75 & 64 \\
& \hspace{0.5em}+~FSTTA & 12.28 & 2.31 & 80 & 75 & 13.96 & 2.89 & 74 & 63 \\
& \hspace{0.5em}+~ReCAP & 12.31 & 2.27 & 81 & 74 & 12.94 & 2.78 & 75 & 64 \\
& \cg \hspace{0.5em}+~\textbf{IDEA (Ours)} & \cg 10.89 & \cg 2.15 & \cg 83 & \cg 79 & \cg 12.03 & \cg 2.53 & \cg 76 & \cg 68 \\
\bottomrule
\end{tabular}
}
\label{stab:R2R}
\end{minipage}
\hfill
\begin{minipage}[t]{0.542 \columnwidth}
\caption{Experimental results on the R2R-CE dataset.}
\centering
\small
\resizebox{\linewidth}{!}{
\begin{tabular}{p{0mm}l|ccccc|ccccc}
\toprule
\multicolumn{1}{c}{} &
\multicolumn{1}{l|}{\multirow{2}{*}{\textbf{Model+Method}}} &
\multicolumn{5}{c|}{\textbf{R2R-CE Val Seen}} &
\multicolumn{5}{c}{\textbf{R2R-CE Val Unseen}} \\
\multicolumn{1}{c}{} &  &
TL$\downarrow$ & NE$\downarrow$ & OSR$\uparrow$ & SR$\uparrow$ & SPL$\uparrow$ &
TL$\downarrow$ & NE$\downarrow$ & OSR$\uparrow$ & SR$\uparrow$ & SPL$\uparrow$ \\
\midrule

\multirow{7}{*}{\makebox[0pt][c]{\rotatebox{90}{With Feedback}}}
& BEVBert              & 13.98 & 3.77 & 73 & 68 & 60 & 13.27 & 4.57 & 67 & 59 & 50 \\
& \hspace{0.5em}+~RLCF&  13.74   & 3.56   & 74 & 69 & 61 &  13.15    & 4.53   & 66 & 60 & 50 \\
& \hspace{0.5em}+~FEEDTTA & 13.54 & 3.08 & 79 & 73 & 63 & 16.15 & 4.33 & 69 & 61 & 50 \\
& \hspace{0.5em}+~ATENA   & 11.31 & 3.24 & 75 & 71 & 64 & 13.48 & 4.50 & 67 & 60 & 51 \\
\cmidrule(lr){2-12}
& ETPNav               & 11.78 & 3.95 & 72 & 66 & 59 & 11.99 & 4.71 & 65 & 57 & 49 \\
& \hspace{0.5em}+~RLCF   & 11.45    & 3.91   & 72 & 66 & 59 & 11.71    &  4.67   & 65 & 57 & 49 \\
& \hspace{0.5em}+~FeedTTA & 10.88 & 3.85 & 72 & 67 & 61 & 11.99 & 4.47 & 66 & 58 & 50 \\
& \hspace{0.5em}+~ATENA   & 10.81 & 3.86 & 72 & 67 & 61 & 12.89 & 4.53 & 66 & 58 & 49 \\
\midrule

\multirow{8}{*}{\makebox[0pt][c]{\rotatebox{90}{Self-supervised}}}
& BEVBert              & 13.98 & 3.77 & 73 & 68 & 60 & 13.27 & 4.57 & 67 & 59 & 50 \\
& \hspace{0.5em}+~FSTTA  & 14.07 & 4.11 & 74 & 69 & 60 & 13.11 & 4.39 & 65 & 60 & 51 \\
& \hspace{0.5em}+~ReCAP  & 12.40 & 3.31 & 76 & 71 & 63 & 13.01 & 4.57 & 66 & 60 & 50 \\
& \cg{\hspace{0.5em}+~\textbf{IDEA (Ours)}} & \cg{12.27} & \cg{3.04} & \cg{78} & \cg{73} & \cg{64} & \cg{12.67} & \cg{4.26} & \cg{69} & \cg{62} & \cg{52} \\
\cmidrule(lr){2-12}
& ETPNav               & 11.78 & 3.95 & 72 & 66 & 59 & 11.99 & 4.71 & 65 & 57 & 49 \\
& \hspace{0.5em}+~FSTTA  & 11.35 & 3.93 & 72 & 66 & 59 & 11.57 & 4.77 & 64 & 57 & 49 \\
& \hspace{0.5em}+~ReCAP  & 11.31 & 3.92 & 72 & 66 & 60 & 11.56 & 4.74 & 65 & 57 & 50 \\
& \cg{\hspace{0.5em}+~\textbf{IDEA (Ours)}} & \cg{10.87} & \cg{3.82} & \cg{73} & \cg{68} & \cg{62} & \cg{11.17} & \cg{4.47} & \cg{67} & \cg{59} & \cg{51} \\
\bottomrule
\end{tabular}
}
\label{stab:R2R-CE}
\end{minipage}
\end{table}

\section{Additional Ablation Study}
\label{sm:ablation}
In Section \ref{sec:ablation} of the main paper, we provided a comprehensive analysis of the hyperparameter effects of the prompt length $L$ and the library
capacity budget $K_{max}$. In this section, we extend our analysis by further investigating the sensitivity of two other critical hyper-parameters: the regularization strength $\lambda$ and the coverage threshold $\tau$. All experiments are conducted on the REVERIE validation unseen split, providing additional insights into the effectiveness and robustness of our method.

\begin{figure}[!b]
\begin{center}
\includegraphics[width=0.7\textwidth]{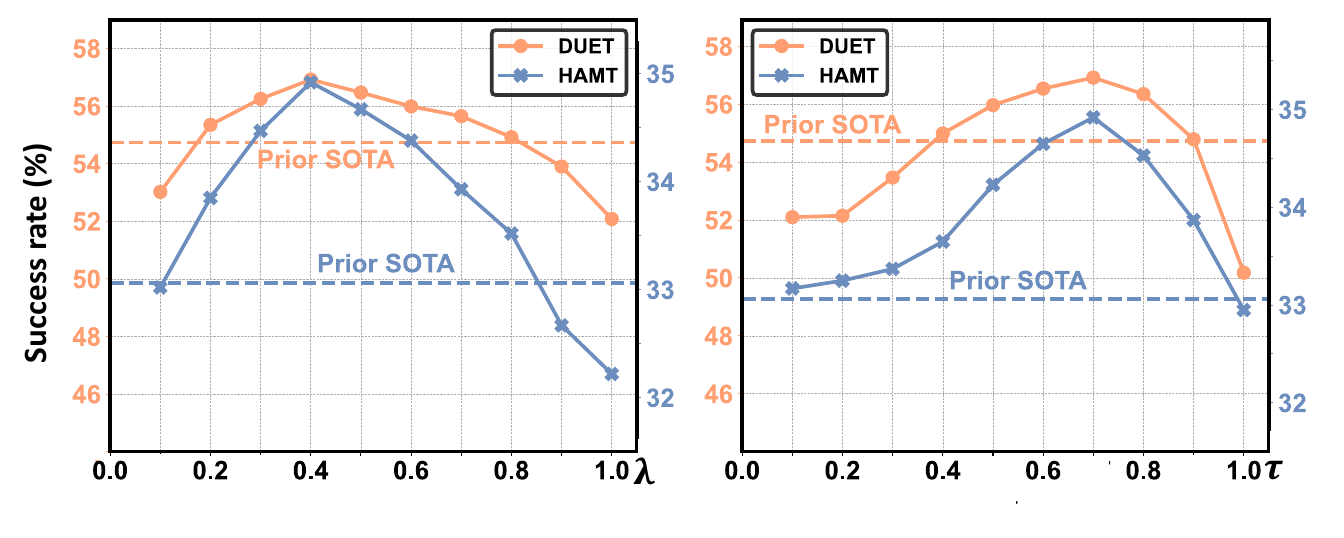}
\end{center}
\caption{Sensitivity analysis on the REVERIE val unseen split: (a) Uncertainty regularization strength $\lambda$ and (b) Coverage threshold $\tau$.}
\label{fig:hyper-parameter}
\end{figure}

\subsection{Sensitivity of $\lambda$ in IDEA}
The coefficient $\lambda$ in Eq.~\ref{eq:wass_obj} governs the impact of the uncertainty-aware regularizer during bridge construction. As shown in Fig.~\ref{fig:hyper-parameter}(a), increasing $\lambda$ progressively penalizes the contributions of unreliable assets, ensuring that the bridge is dominated by low-uncertainty priors. This filtering effect leads to improved performance, which peaks at $0.4$. This optimum signifies a balanced trade-off between the uncertainty penalty and the distributional alignment objective. However, when $\lambda$ exceeds 
$0.8$, the performance begins to degrade. This is likely because over-regularization forces the weight distribution towards sparsity that ignores the actual statistical similarity, hindering the optimal projection. Based on these results, we identify $[0.2,0.8]$ as the robust operating range where IDEA consistently outperforms the prior SOTA (ReCAP) and select $\lambda=0.4$ by default.

\subsection{Sensitivity of $\tau$ in IDEA}

The parameter $\tau$ acts as a gatekeeper, controlling the trade-off between utilizing the lightweight bridge and triggering the asset optimization process. As illustrated in Fig.~\ref{fig:hyper-parameter}(b), performance improves as $\tau$ increases, reaching a peak Success Rate (SR) at $0.7$. This indicates that a moderately relaxed threshold allows the agent to effectively leverage the bridge for reducing distribution shifts, accelerating adaptation without incurring optimization costs. However, when $\tau$ exceeds the optimal range $[0.1,0.8]$, performance begins to degrade significantly. This suggests that an overly permissible threshold accepts suboptimal bridges that fail to sufficiently reduce the domain gap, thereby introducing noisy priors and leading to negative transfer. Based on this observation, we set $\tau=0.7$ as the default, balancing bridge utilization efficiency with rigorous quality control.
\section{More Implementation Details}
\label{sm:imple}
\subsection{Additional Details of Statistic Computing and Alignment}

In this subsection, we provide the explicit mathematical formulations for computing the feature statistics and the layer-wise alignment objective used in our asset optimization.

\paragraph{Feature Statistics Computation.}
As described in the main paper, at each navigation step $t$, the model processes an exploration graph $G_t$, containing $N$ navigable candidate nodes.

Let $\mathcal{Z}_t^\ell(P)\in\mathbb{R}^{N\times C}$ denote the set of multi-modal tokens obtained from the $\ell$-th transformer layer with prompt injection, where $\mathcal{z}_{t,i}^\ell\in\mathbb{R}^{C}$ represents the feature vector for the $i$-th candidate node.

We compute the first-order (mean) and second-order (standard deviation) statistics by pooling over the candidate node dimension $N$. Formally, the mean vector $\mu_t^{(\ell)} \in \mathbb{R}^C$ and the standard deviation vector $\sigma_t^{(\ell)} \in \mathbb{R}^C$ for layer $\ell$ are calculated as follows:

\begin{equation}
\mu_t^{(\ell)} = \frac{1}{N} \sum_{i=1}^{N} z_{t,i}^{(\ell)},\quad
\sigma_t^{(\ell)} = \sqrt{\frac{1}{N-1} \sum_{i=1}^{N} \left( z_{t,i}^{(\ell)} - \mu_t^{(\ell)} \right)^2 + \epsilon},
\end{equation}
where the squaring operation and the square root are applied element-wise, and $\epsilon=1 e^{-6}$ is a small constant for numerical stability. These statistics $\Gamma_t^{(\ell)}=\left\{\mu_t^{(\ell)}, \sigma_t^{(\ell)}\right\}$ serve as a compact descriptor of the current visual-linguistic context, capturing the distributional properties of the navigable space.

\paragraph{Source Statistics Precomputation.}
The source domain statistics $\Gamma_S^{(\ell)}=\left\{\mu_S^{(\ell)}, \sigma_S^{(\ell)}\right\}$ are precomputed offline. We randomly sample a subset of 128 trajectories from the training data. For each trajectory, we collect the feature tokens from all steps and compute the global mean and standard deviation across all sampled nodes. These fixed source statistics serve as the anchor for our online moment-matching objective. Since we only store compact statistical vectors (dimension $C=768$, the number of fusion layers $M=4$) rather than raw data, the total storage including both the precomputed source statistics and the dynamic asset library, is merely \textbf{0.58 MB}. This negligible overhead corresponds to less than $\textbf{0.1}\%$ of the source model size, making our framework exceptionally lightweight and suitable for resource-constrained deployment.

\paragraph{Layer-wise Alignment.}
The alignment loss aims to minimize the discrepancy between the current online statistics and the source anchors.
Substituting the computed definitions into the objective function (Eq.~\ref{eq:weight} in the main paper), the explicit optimization problem for prompt $P$ becomes:
\begin{equation}
\min_{P} \sum_{\ell=1}^{M} \alpha_{\ell} \left( \|\mu_S^{(\ell)} - \mu_t^{(\ell)}(P)\|_2 + \|\sigma_S^{(\ell)} - \sigma_t^{(\ell)}(P)\|_2 \right).
\end{equation}
For the initial asset (\textit{i.e.}, the first novel domain encountered), we initialize the prompt using a random Gaussian distribution. For all subsequent assets, we employ the constructed bridge prompt as a warm-start initialization. Each asset is optimized for $O=50$ steps.

\begin{algorithm}[!t]
    \fontsize{10.5}{13}\selectfont

    \KwIn{%
        Task stream $\mathcal{X} = \{X_i\}_{i=1}^{N}$ (sequence of navigation episodes),
        pre-trained policy $\pi_\theta$,
        initial asset library $\mathcal{M} = \emptyset$,
        capacity budget $K_{\max}$, optimization steps $O$,
        threshold $\tau$,
        regularization $\lambda$.
    }
    \KwOut{Action sequence for each task.}

    \For{Task $X_i \in \mathcal{X}$} {
        Initialize agent state $s_0$ and step $t=0$;

        \While{not stop} {
            Obtain observation $s_t$ and extract statistics $\Gamma_t = \{\mu_t, \sigma_t\}$;

            \tcp{1. Try Bridge Construction (Shortcut)}
            Compute mixture weights $w$ over $\mathcal{M}$ via closed-form solution \tcp*{(Eq.~\ref{eq:closed_form})}
            Construct bridge prompt $P_b = \sum w_k P_k$ and induced statistics $\Gamma_b$;\tcp*{(Eq.~\ref{eq:bridge_mix})}

            Calculate discrepancy: $d_p = \mathcal{W}(\Gamma_t, \Gamma_S)$ vs. $d_0 = \mathcal{W}(\Gamma_b, \Gamma_S)$;

            \eIf{$d_p < \tau \cdot d_0$ (\textbf{Covered})} {
                Set current prompt $P_t^* \leftarrow P_b$;
            }{
                Initialize $P$ (warm-start with $P_b$);
                Compute Fisher-guided Weighting $\alpha$ based on current sensitivity\tcp*{(Eq.~\ref{eq:score})}
                Optimize the prompt $P$ in multi-layer alignment for $O$ steps\tcp*{(Eq.~\ref{eq:weight})}
                Set current prompt $P_t^* \leftarrow P$;

                Form a new asset $\mathcal{A}^* = \{P_t^*, \Gamma_t, \text{Ent}(\pi_{\theta, P_t^*})\}$;

                \eIf{$|\mathcal{M}| < K_{\max}$} {
                    $\mathcal{M} \leftarrow \mathcal{M} \cup \{ \mathcal{A}^{*} \}$;
                }{
                    Find nearest neighbor: $k = \arg\min_{j} d(\Gamma_j, \Gamma_t)$;
                    Merge asset: $\mathcal{A}_k \leftarrow \frac{1}{2}(\mathcal{A}_k + \mathcal{A}^{*})$;
                }
            }

            Predict and execute action $a_t \sim \pi_{\theta, P_t^*}(s_t)$;
            Transition to next state $s_{t+1}$, $t \leftarrow t+1$;
        }
    }

    \caption{\textbf{I}nter-\textbf{D}omain Bridg\textbf{E} with Historical \textbf{A}ssets (IDEA)}
    \label{alg:idea}
\end{algorithm}

\subsection{Pseudocode of our IDEA}
Algorithm~\ref{alg:idea} outlines the complete execution workflow of our proposed framework.
Operating at the step level, IDEA continuously monitors the domain shift for each navigation step.
The procedure follows a conditional dual-branch logic:
1) It first attempts to construct a training-free bridge by compositing historical assets. If the coverage criterion is met, the bridge is directly deployed for efficient inference.
2) Otherwise, it triggers the online optimization process to acquire a new asset, which is subsequently merged into the library to incrementally expand the agent's knowledge base.

\section{More Experimental Details}
\label{sm:details}
\subsection{More Details on Dataset}
In this paper, we primarily evaluate the robustness and adaptability of all methods using the widely adopted benchmark: REVERIE (\textbf{R}emote \textbf{E}mbodied \textbf{V}isual r\textbf{E}ferring exp\textbf{R}ession \textbf{I}n \textbf{R}eal-\textbf{I}ndoor \textbf{E}nvironments)~\cite{qi2020reverie}.
Derived from the Matterport3D simulator~\cite{chang2017matterport3d}, REVERIE comprises 10,567 panoramic images and 21,702 high-level instructions spanning 90 distinct buildings. The benchmark is designed to assess an agent's ability to localize remote objects described by natural language.
Specifically, it features concise instructions targeting specific items (\textit{e.g.}, \textit{``Find the cushion on the sofa in the living room"}), requiring the agent to simultaneously perform long-horizon navigation and fine-grained object grounding.
The dataset is partitioned into \textit{Val Seen}, \textit{Val Unseen}, and \textit{Test Unseen} splits. Crucially, the \textit{Val/Test Unseen} splits involve environments strictly disjoint from the training set, simulating real-world deployment scenarios where agents must generalize to novel architectural layouts and unseen object appearances.

Additionally, we conduct experiments on two other challenging benchmarks, R2R~\cite{anderson2018vision} and R2R-CE~\cite{krantz2020beyond}, to further validate the versatility of our method across different navigation tasks.
\textbf{R2R} (Room-to-Room) is built upon discrete connectivity graphs, comprising 10,800 panoramic views and 7,189 trajectories.
It focuses on fine-grained instruction following, where the agent must strictly adhere to detailed step-by-step commands (\textit{e.g.}, \textit{``Walk past the kitchen, turn left at the hallway..."}).
Consequently, this dataset places a premium on trajectory fidelity and precise local scene understanding.
\textbf{R2R-CE} (Continuous Environment) elevates the challenge by deploying agents in continuous 3D spaces rather than pre-defined graphs.
Containing 16,000 instruction-trajectory pairs, it introduces realistic low-level control noise and obstacle collision dynamics.
This setting demands robust adaptability to continuous sensor streams and complex physical interactions, testing the agent's stability beyond discrete decision-making.

\subsection{Details of Evaluation Metrics}
We employ standard metrics for Vision-Language Navigation (VLN) to comprehensively assess agent performance. The details are as follows:

\paragraph{Trajectory Length.}
TL measures the average total path length traversed by the agent, expressed in meters. While not a direct indicator of success, a lower TL (closer to the shortest path length) generally implies more efficient exploration without redundant movements.

\paragraph{Navigation Error.}
NE is defined as the average geodesic distance (in meters) between the agent's final predicted location and the ground-truth target location. A lower NE indicates higher precision in locating the destination.

\paragraph{Success Rate.}
SR is the primary metric for navigation accuracy. It represents the percentage of episodes where the agent successfully stops within a threshold distance (typically 3 meters) of the target location. Formally, $S R=\frac{1}{N} \sum_{i=1}^N \mathbb{I}\left[N E_i<3 m\right]$, where $N$ is the total number of episodes.

\paragraph{Oracle Success Rate.}
OSR measures the potential for success. It calculates the percentage of episodes where \textit{at least one point} along the agent's traversed trajectory falls within the success threshold (3m) of the target, regardless of where the agent actually stopped. A large gap between SR and OSR typically indicates a failure in stop condition recognition.

\paragraph{Success penalized by Path Length.} SPL evaluates the weighed trajectory efficiency for navigation success, where a score closer to SR indicates that the trajectory closely followed the shortest path. The metric is defined as: $\mathrm{SPL}=\frac{1}{N} \sum_{n=1}^N S_n \frac{\mathrm{TL}_n}{\max \left(\mathrm{SP}_n, \mathrm{TL}_n\right)}$, where $S$ is the binary indicator for success and SP denotes the shortest path. Higher SPL indicates highly efficient and accurate navigation.

\paragraph{Remote Grounding Success penalized by Path Length.} RGSPL extends the SPL concept to the object grounding task. It measures the efficiency of successfully identifying the target object (Remote Grounding Success) weighted by the path length taken to reach the viewpoint from which the object is identified.

\end{document}